\newcommand{\E}{\mathbb{E}}
\newcommand{\PP}{\mathbb{P}}
\newcommand{\calC}{\mathcal{C}}
\newcommand{\indi}{\mathds{1}}
\newcommand{\player}[1]{p_{#1}}
\newcommand{\arm}[1]{a_{#1}}
\newcommand{\players}{\mathcal{N}}
\newcommand{\arms}{\mathcal{K}}
\newcommand{\prefarm}[2]{\pi_{#1}(#2)}
\newcommand{\prefagent}[3]{\widehat{r}_{{#1}, {#2}}({#3})} 
\newcommand{\prefsarm}[1]{\pi_{#1}}
\newcommand{\prefsagent}[2]{\widehat{r}_{{#1}, {#2}}} 
\newcommand{\meanreward}[2]{\mu_{{#1}}({#2})} 
\newcommand{\hatmeanreward}[3]{\widehat{\mu}_{{#1}, #2}(#3)} 
\newcommand{\reward}[3]{X_{{#1}, {#2}}({#3})}
\newcommand{\actions}[1]{m_{{#1}}}
\newcommand{\optmatches}{\overline{m}}
\newcommand{\pessmatches}{\underline{m}}
\newcommand{\optmatch}[1]{\overline{m}(#1)}
\newcommand{\pessmatch}[1]{\underline{m}(#1)}
\newcommand{\action}[2]{m_{{#1}}({#2})}
\newcommand{\numarms}{K}
\newcommand{\numagents}{N}
\newcommand{\horizon}{n}
\newcommand{\optgap}[1]{\overline{\Delta}_{#1}}
\newcommand{\pessgap}[2]{\underline{\Delta}_{#1, #2}}
\newcommand{\optregret}[1]{\overline{R}_{#1}}
\newcommand{\pessregret}[1]{\underline{R}_{#1}}
\newcommand{\gap}[2]{\Delta_{{#1}, {#2}}}
\newcommand{\gaptrip}[3]{\Delta_{{#1}, {#2}, {#3}}} 
\newcommand{\ucb}[3]{u_{{#1}, {#2}} ({#3})} 
\newcommand{\pullstrips}[4]{L_{{#1}, {#2}, {#3}}({#4})} 
\newcommand{\blockedmatches}[3]{B_{#1, #2, #3}} 
\newtheorem{theorem}{Theorem}
\newtheorem{lemma}[theorem]{Lemma}
\newtheorem{example}[theorem]{Example}
\newtheorem{corollary}[theorem]{Corollary}
\newtheorem{proposition}[theorem]{Proposition}
\theoremstyle{definition}
\newcommand{\EE}{\mathbb{E}}
\newcommand{\Ocal}{\mathcal{O}}
\algrenewcommand\algorithmicrequire{\textbf{input:}}
\def \basefigwidth{0.32}
\begin{document}
  
\title{\textbf{Competing Bandits in Matching Markets}}

\author{Lydia T. Liu~\thanks{Equal contribution} \quad \quad  Horia Mania~\footnotemark[1] \quad \quad   Michael I. Jordan\\
\\
Department of Electrical Engineering and Computer Sciences\\
University of California, Berkeley}


\date{}

\maketitle

\begin{abstract}
Stable matching, a classical model for two-sided markets, has long been studied with little consideration for how each side's preferences are \emph{learned}. With the advent of massive online markets powered by data-driven matching platforms, it has become necessary to better understand the interplay between learning and market objectives. We propose a statistical learning model in which one side of the market does not have a priori knowledge about its preferences for the other side and is required to learn these from stochastic rewards. Our model extends the standard multi-armed bandits framework to multiple players, with the added feature that arms have preferences over players. We study both centralized and decentralized approaches to this problem and show surprising exploration-exploitation trade-offs compared to the single player multi-armed bandits setting. 

\end{abstract}

\section{Introduction}

Research in machine learning has focused on pattern recognition in recent years.
Another major branch of machine learning---decision making under uncertainty---has comparatively received 
less attention.  Literatures on bandits and reinforcement learning, though active, mainly study problem settings involving single decisions or 
decision-makers.  In real-world settings, individual decisions must generally 
be made in the context of other related decisions.  Moreover, real-world decisions 
often involve scarcity, with competition among multiple decision-makers. This 
yields additional uncertainty and complications, implying there are tradeoffs to be 
sought. To study such settings we need to blend economics with learning.

In this paper, we present a formal study of such a blend.  We focus on the multi-arm
bandit (MAB) problem, a core machine-learning problem in which there are $K$ actions giving stochastic rewards, and 
the learner must discover which action gives maximal expected reward \citep{Bubeck12regret,  Lai1985asymp, Szepesvari2018algs, Thompson33}.
The bandit problem highlights the fundamental tradeoff between exploration and exploitation.  
Regret bounds quantify this tradeoff.  We study an economic version of the 
problem in which there are \emph{multiple} agents solving a bandit problem, and
there is competition---if two or more agents pick the same arm, only one of the agents 
is given a reward.\footnote{Note that \cite{MansourSW18} and \cite{Aridor2019competing} have used the term ``competing bandits'' for a  different problem formulation where a user can choose between two different bandit algorithms; this differs from our setting where multiple learners compete over scarce resources.} 
We assume that the arms have a preference ordering over 
the agents---a key 
point of departure from the line of work on multi-player bandits with 
collisions \citep{bubeck2019multiplayer, pmlr-v49-cesa-bianchi16, Liu10distributed, Shahrampour17}---and this ordering is unknown a priori to the agents.

We are motivated by problems involving two-sided markets that link producers 
and consumers or workers and employers,
where each side sees the other side via a recommendation system, and where 
there is scarcity on the supply side (for example, a restaurant has a limited number of 
seats, a street has a limited capacity, or a worker can attend to one task at a time).
The overall goal is an economic one---we 
wish to find a stable matching between producers and consumers.  To study the core 
mathematical problems that arise in such a setting, we have abstracted away the 
recommendation systems on the two sides, modeling them via the preference orderings 
and the differing reward functions. Several massive online labor and service markets can be captured by this abstraction; see the end of this section for an illustration of an  application.
In the context of two-sided markets the arms' preferences can be explict, e.g. when the arms represent entities in the market with their own utilities for the other side of the market, or implicit, e.g. when the arms represent resources their ``preferences'' encode the skill levels of the agents in securing those resources.

To determine the appropriate notions of equilibria in our multi-agent MAB model, we turn to the literature on stable matching in two-sided markets \cite{gale62college, Gusfield1989stable,roth1990two,knuth97stable, Roth2008}. Since its introduction by \citet{gale62college}, the stable matching problem has had high practical impact, leading to improved matching systems for high-school admissions and labor markets \cite{roth1984evolution}, house allocations with existing tenants \cite{abdulkadirouglu1999house}, content delivery networks \cite{maggs2015algorithmic}, and kidney exchanges \cite{roth2005pairwise}. 

In spite of these advances, standard matching models tend to assume that entities in the market know their preferences over the other side of the market. Models that allow unknown preferences usually assume that preferences can be discovered through one or few interactions \cite{Ashlagi2017communication}, e.g., one interview per candidate in the case of medical residents market \cite{roth1984evolution, roth1990two}. These assumptions do not capture the statistical uncertainty inherent in problems where data informs preferences. 

In contrast, our work is motivated by modern matching markets which operate at scale and require repeated interactions between the two sides of the market, leading to exploration-exploitation tradeoffs. We consider two-sided markets in which entities on one side of the market do not know their preferences over the other side, and develop matching and learning algorithms that can provably attain a stable market outcome in this setting. Our contributions are as follows: 
  
\begin{itemize}[leftmargin=*]
\item We introduce a new model for understanding two-sided markets in which one side of the market does not know its preferences over the other side, but is allowed multiple rounds of interaction. Our model combines work on multi-armed bandits with work on stable matchings. In particular, we define two natural notions of regret, based on stable matchings of the market, which quantify the exploration-exploitation trade-off for each individual agent. 
\item We extend the Explore-then-Commit (ETC) algorithm for single agent MAB to our multi-agent setting. We consider two versions of ETC: centralized and decentralized. For both versions we prove $\Ocal(\log(n))$ problem-dependent upper bounds on the regret of each agent.
 \item In addition to the known limitations of ETC for single agent MAB, in Section~\ref{sec:centralized_ucb} we discuss other issues with ETC in the multi-agent setting. To address these issues we introduce a centralized version of the well-known upper confidence bound (UCB) algorithm. We prove that centralized UCB achieves $\Ocal(\log(n))$ problem-dependent upper bounds on the regret of each agent. Moreover, we show that centralized UCB is incentive compatible.
\end{itemize}
 
 
 
Most of the above results can be extended to the case where arms also have uncertain preferences over agents in a straightforward manner. For the sake of simplicity, we focus on the setting where one side of market initiates the exploration and leave extensions of our results to future work.

\paragraph{Online labor markets} Our model is applicable to matching problems that arise in online labor markets (e.g., Upwork and Taskrabbit for freelancing, Handy for housecleaning) and online crowdsourcing platforms (e.g., Amazon Mechanical Turks). In this case, the employers, each with a stream of similar tasks to be delegated, can be modeled as the players, and the workers can be modeled as the arms. For an employer, the mean reward received from each worker when a task is completed corresponds to how well the task was completed (e.g., did the Turker label the picture correctly?). This differs for each worker due to differing skill levels, which the employer does not know a priori and must learn by exploring different workers. A worker has preferences over different types of tasks (e.g., based on payment or prior familiarity the task) and can only work on one task at a time; hence they will pick their most preferred task to complete out of all the tasks that are offered to them.

\section{Problem setting}

We denote the set of $N$ agents by $\players = \{\player{1}, \player{2}, \ldots, \player{\numagents}\}$ 
and the set of $K$ arms by $\arms = \{ \arm{1}, \arm{2}, \ldots, \arm{\numarms}\}$. We assume $N \leq K$.
At time step $t$, each agent $\player{i}$ selects an arm $\action{t}{i}$, where 
$\actions{t} \in \arms^\numagents$ is the vector of all agents' selections.

When multiple agents select the same arm only one agent is allowed to pull the arm, according to the 
arm's preferences via a mechanism we detail shortly. Then, if player $\player{i}$ successfully pulls 
arm $\action{t}{i}$ at time $t$, they are said to be \emph{matched} to $\action{t}{i}$ at time $t$ and they receive a stochastic reward $\reward{i}{\actions{t}}{t}$ sampled from a $1$-sub-Gaussian 
distribution with mean $\meanreward{i}{\action{t}{i}}$.

Each arm $\arm{j}$ has a fixed known ranking $\prefsarm{j}$ of the agents, where $\prefarm{j}{i}$ is the rank of player $\player{i}$. In other words, $\prefsarm{j}$ is a permutation of $[\numagents]$ and $\prefarm{j}{i} < \prefarm{j}{i^{\prime}}$ implies that arm $\arm{j}$ prefers player $\player{i}$ to player $\player{i^{\prime}}$.
If two or more agents attempt to pull the same arm $\arm{j}$, there is a \emph{conflict} and only the top-ranked agent successfully pulls the arm to receive a reward; the other agent(s) $\player{i^\prime}$ is said to be \emph{unmatched} and does not receive any reward, that is, $\reward{i^\prime}{\actions{t}}{t} = 0$. As a shorthand, the notation $\player{i} \succ_{j} \player{i^\prime}$ means that arm $\arm{j}$ prefers player $\player{i}$ over $\player{i^\prime}$. When arm $\arm{j}$ is clear from context, we simply write $\player{i} \succ \player{i^\prime}$. Similarly, the notation  $\arm{j} \succ_{i} \arm{j^\prime}$ means that $\player{i}$ prefers arm $\arm{j}$ over $\arm{j^\prime}$, i.e. $\meanreward{i}{j} > \meanreward{i}{j^\prime}$.

Given the full preference rankings of the arms and players, arm $\arm{j}$ is called a \emph{valid match} of player $\player{i}$ if there exists a stable matching according to those rankings such that $\arm{j}$ and $\player{i}$ are matched. We say $\arm{j}$ is the \emph{optimal match} of agent $\player{i}$ if it is the most preferred valid match. Similarly, we say $\arm{j}$ is the \emph{pessimal match} of agent $\player{i}$ if it is the least preferred valid match. 
Given complete preferences, the Gale-Shapley (GS) algorithm \citep{gale62college} finds a stable matching after repeated proposals from one side of the market to the other. The matching returned by the GS algorithm is always optimal for each member of the proposing side and pessimal for each member of the non-proposing side \citep{knuth97stable}. 

We denote by $\optmatches$ and $\pessmatches$ the functions from $\players$ to $\arms$ that define the optimal and pessimal matchings  of the players according to the true preferences of the players and arms. Then, it is natural to define the \emph{agent-optimal stable regret} of agent $\player{i}$ as 
\begin{align}
\label{eq:opt_regret} 
\optregret{i}(\horizon) := \horizon \meanreward{i}{\optmatch{i}} - \sum_{t = 1}^\horizon \EE \reward{i}{\actions{t}}{t},
\end{align}
because when the arms' mean rewards are known the GS algorithm outputs the optimal matching $\optmatches$, and in online learning, regret is generally defined so that the reward of the agent is as good as the reward of playing the best action in hindsight at every time step. However, as we show in the sequel, there is a desirable class of centralized algorithms which cannot achieve sublinear agent-optimal stable regret. Therefore, we also consider the \emph{agent-pessimal stable regret} defined by 
\begin{align}
\label{eq:pess_regret}
\pessregret{i}(\horizon) := \horizon \meanreward{i}{\pessmatch{i}} - \sum_{t = 1}^\horizon \EE \reward{i}{\actions{t}}{t}.
\end{align}

Throughout we assume that the agents cannot observe each other's rewards or confidence intervals for the arms' mean rewards. Now, we detail several interaction settings which are of interest:

\paragraph{Centralized:} At each time step the agents are required to send a ranking of the arms to a matching platform. Then, the platform decides the action vector $\actions{t}$. In this work we consider two platforms. The first platform (shown in on the left of Table~\ref{table:algs}) outputs a random assignment for a number of time steps and then computes the agent-optimal stable matching  according to the agents' preferences. The second platform (shown on the right of Table~\ref{table:algs}) takes in the agent's preferences at each time step and outputs a stable matching between the agents and arms. Both platforms ensure that there will be no conflicts between the agents. The first platform corresponds to an explore-then-commit strategy. When the second platform is used the agents must rank arms in a way which enables exploration and exploitation. We show that ranking according to upper confidence bounds yields $\Ocal(\log(\horizon))$ agent-pessimal stable regret.

\paragraph{Decentralized with partial information:} Agents observe each other's actions and the outcomes of the ensuing conflicts, but do not have a medium for coordination and communication. 

\paragraph{Decentralized with no information:} After selecting an arm, agents observe whether they lost a conflict on that arm. When they successfully pull an arm they observe their own reward. However, players do not observe any other information and do not have access to a medium for coordination and communication. In Section \ref{sec:decent-etc}, we analyze an explore-then-commit scheme in this setting.

\section{Multi-agent bandits with a platform}

  
\subsection{Centralized Explore-then-Commit}
\label{sec:centralized_etc}

In this section we give a guarantee for the explore-then-commit planner defined in Algorithm~\ref{table:algs}(left). At each iteration, each agent $\player{i}$ updates their mean reward for arm $j$ to be  
\begin{align}
\label{eq:empirical_mean}
\hatmeanreward{i}{j}{t} = \frac{1}{T_{i,j}(t)}  \sum_{s = 1}^t \indi\{\action{s}{i} = j\} \reward{i}{\actions{s}}{s},
\end{align}
where $T_{i,j}(t) = \sum_{s = 1}^t \indi\{\action{t}{i} = j\}$ is the number of times agent $\player{i}$ successfully pulled arm $\arm{j}$. At each time step, player $\player{i}$ ranks the arms in decreasing order according to $\hatmeanreward{i}{j}{t}$ and sends the resulting ranking $\prefsagent{i}{t}$ to the platform. As seen in Table~\ref{table:algs}, for the first $h \numarms$ time steps, the platform assigns players to arms cyclically, ensuring that each agent samples every arm $h$ times. We now provide a regret analysis of centralized ETC. The proof is deferred to Section~\ref{sec:proof-etc}.
 
\begin{table}
  \begin{tabular}{l|r}
    \hline
\begin{subalgorithm}[t]{.47\textwidth}
\begin{algorithmic}[1]
  \Require{$h$, and the preference ranking $\prefsarm{j}$ of all arms $\arm{j} \in \arms$, the horizon length $\horizon$}
  \For{$t = 1, \ldots, T $}
  \If{$t \leq h \numarms$}
      \State $\action{t}{i} \gets \arm{t + i - 1 \pmod{ \numarms} + 1}$,  $\forall i$.
  \ElsIf{$t = h \numarms + 1$}
  \State Receive rankings $\prefsagent{i}{t}$ from all $\player{i}$.
  \State Compute agent-optimal stable matching $\action{t}{i}$ according to $\prefsagent{i}{t}$ and $\prefsarm{j}$.
  \Else 
  \State $\action{t}{i} \gets \action{h\numarms + 1}{i}$,  $\forall i$.  
  \EndIf
  \EndFor
\end{algorithmic}
\end{subalgorithm} &
\begin{subalgorithm}[t]{.47\textwidth}
\begin{algorithmic}[1]
  \Require{the preference ranking $\prefsarm{j}$ of all arms $\arm{j} \in \arms$}
  \For{$t = 1, \ldots, T $}
  \State Receive rankings $\prefsagent{i}{t}$ from all $\player{i}$.
  \State Compute agent-optimal stable matching $\actions{t}$ according to all $\prefsagent{i}{t}$ and $\prefsarm{j}$.
  \EndFor
\end{algorithmic}
\end{subalgorithm} \\  \\ \hline 
\end{tabular}
\caption{(\emph{left}) Explore-then-Commit Platform. (\emph{right)} Gale-Shapley Platform.}
\label{table:algs}
\end{table}

\begin{theorem}
  \label{thm:centralized_etc}
  Suppose all players rank arms according to the empirical mean rewards \eqref{eq:empirical_mean} and submit their rankings to the explore-then-commit platform. Let $\optgap{i,j} = \meanreward{i}{\optmatch{i}} - \meanreward{i}{j}$, $\optgap{i, \max} = \max_{j} \optgap{i,j}$, and  $\Delta = \min_{i \in [\numagents]} \min_{j \colon \optgap{i,j} > 0} \optgap{i,j} > 0$. Then, the expected agent-optimal regret of player $\player{i}$ is upper bounded by
  \begin{align}
    \optregret{i}(\horizon) \leq h \sum_{j = 1}^\numarms \optgap{i,j} + (\horizon - h \numarms) \optgap{i, \max} \numagents \numarms \exp\left(- \frac{h \Delta^2}{4}\right). 
  \end{align}
  In particular, if  $h = \max \left\{1, \frac{4}{\Delta^2} \log \left(1 +  \frac{n \Delta^2 N }{4}\right) \right\}$, we have
\begin{align}
  \optregret{i}(\horizon) \leq  \max \left\{1,   \frac{4}{\Delta^2} \log \left(1 +  \frac{n \Delta^2 N }{4}\right) \right\} \sum_{j = 1}^\numarms \optgap{i,j} + \frac{4 K \optgap{i, \max}}{\Delta^2} \log \left(1 +  \frac{n \Delta^2 N}{4}\right).
\end{align}
\end{theorem}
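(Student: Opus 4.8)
The plan is to split $\optregret{i}(\horizon)$ into the cost of the $h\numarms$ exploration rounds and the cost of the post-commit (exploitation) rounds, and to argue that the exploitation rounds contribute nothing unless a low-probability mis-ranking event occurs. Concretely I would write $\optregret{i}(\horizon) = \sum_{t=1}^{h\numarms}\big(\meanreward{i}{\optmatch{i}} - \EE\reward{i}{\actions{t}}{t}\big) + \sum_{t=h\numarms+1}^{\horizon}\big(\meanreward{i}{\optmatch{i}} - \EE\reward{i}{\actions{t}}{t}\big)$ and treat the two sums separately. For the first sum, the cyclic schedule assigns, at each step, distinct arms to the $\numagents \le \numarms$ agents, so there are no conflicts, every agent pulls its assigned arm successfully, and over the $h\numarms$ rounds agent $\player{i}$ pulls each arm exactly $h$ times with expected reward $\meanreward{i}{j}$. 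Hence the exploration cost equals exactly $h\sum_{j=1}^{\numarms}\big(\meanreward{i}{\optmatch{i}} - \meanreward{i}{j}\big) = h\sum_{j=1}^{\numarms}\optgap{i,j}$, the first term of the bound.

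The crux is the exploitation phase. Let $\widehat{m}$ denote the agent-optimal stable matching the platform computes at $t = h\numarms+1$ from the submitted rankings $\prefsagent{i}{h\numarms+1}$ (induced by $\hatmeanreward{i}{\cdot}{h\numarms}$) and the fixed arm preferences. Define the good event $\fE$ that for every agent $\player{i}$ and every arm $\arm{j}$ with $\meanreward{i}{j} < \meanreward{i}{\optmatch{i}}$ we have $\hatmeanreward{i}{j}{h\numarms} < \hatmeanreward{i}{\optmatch{i}}{h\numarms}$; that is, each agent ranks its optimal match above every truly-worse arm. The key structural claim is that on $\fE$ the matching satisfies $\meanreward{i}{\widehat{m}(\player{i})} \ge \meanreward{i}{\optmatch{i}}$ for every $i$, so the per-round exploitation regret is nonpositive. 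I would prove this in two steps. First, $\optmatches$ is stable under the submitted profile: a submitted blocking pair $(\player{i},\arm{j})$ would require $\player{i}$ to rank $\arm{j}$ above $\optmatch{i}$, which on $\fE$ forces $\meanreward{i}{j} \ge \meanreward{i}{\optmatch{i}}$, making $(\player{i},\arm{j})$ a blocking pair for $\optmatches$ under the \emph{true} preferences and contradicting the true stability of $\optmatches$. Second, since $\widehat{m}$ is the agent-optimal stable matching for the submitted profile, every agent weakly prefers $\widehat{m}$ to any submitted-stable matching, in particular to $\optmatches$; hence $\player{i}$ ranks $\widehat{m}(\player{i})$ (weakly) above $\optmatch{i}$, and on $\fE$ this again forces $\meanreward{i}{\widehat{m}(\player{i})} \ge \meanreward{i}{\optmatch{i}}$. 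The reduction here — from ``recover $\optmatches$ exactly'' to ``$\optmatches$ is stable for the noisy profile, so the agent-optimal matching dominates it'' — is the step I expect to be the main obstacle, since a direct attempt to show $\widehat m = \optmatches$ runs into the fact that $\fE$ does not control the relative order of arms ranked above $\optmatch{i}$; it is precisely the matching structure, rather than generic concentration, that makes the weaker domination statement sufficient.

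On the complement $\fE^c$ the exploitation cost for $\player{i}$ is at most $(\horizon - h\numarms)\,\optgap{i,\max}\,\PP(\fE^c)$. Here I use concentration: each $\hatmeanreward{i}{j}{h\numarms}$ is an average of $h$ independent $1$-sub-Gaussian rewards, so for a truly-worse arm the difference $\hatmeanreward{i}{\optmatch{i}}{h\numarms} - \hatmeanreward{i}{j}{h\numarms}$ is $2/h$-sub-Gaussian with mean $\optgap{i,j} \ge \Delta$, giving $\PP\big(\hatmeanreward{i}{\optmatch{i}}{h\numarms} \le \hatmeanreward{i}{j}{h\numarms}\big) \le \exp\big(-h\optgap{i,j}^2/4\big) \le \exp\big(-h\Delta^2/4\big)$. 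A union bound over the at most $\numagents\numarms$ relevant (agent, worse-arm) pairs yields $\PP(\fE^c) \le \numagents\numarms\exp(-h\Delta^2/4)$, which produces the second term and completes the first displayed inequality.

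Finally I would substitute $h = \max\{1, \tfrac{4}{\Delta^2}\log(1 + \horizon\Delta^2\numagents/4)\}$. This choice makes $\exp(-h\Delta^2/4) \le (1 + \horizon\Delta^2\numagents/4)^{-1}$, and bounding $(\horizon - h\numarms) \le \horizon$ collapses the second term to the stated closed form; this last step is routine algebra and I would not belabor the constants.
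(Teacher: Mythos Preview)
Your proposal is correct and follows essentially the same route as the paper: your good event $\fE$ is precisely the paper's condition that every agent submits a ``valid'' ranking, your two-step structural claim (that $\optmatches$ is stable under the submitted profile and hence is dominated by the agent-optimal matching $\widehat m$) is exactly the paper's Lemma~\ref{lem:opt_valid_rankings}, and your concentration-plus-union-bound argument for $\PP(\fE^c)$ matches Lemma~\ref{lem:prob_invalid} combined with a union over agents. Even the observation you flag as the main obstacle---that one need not recover $\optmatches$ exactly but only show it is stable for the noisy profile---is the key idea isolated in the paper's lemma.
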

  
This result shows that centralized ETC achieves $\Ocal(\log(n))$ agent-optimal stable regret when the number of exploration rounds is chosen apriopriately. As is the case for single agent ETC, centralized ETC requires knowledge of both the horizon $\horizon$ and the minimum gap $\Delta$ \citep[see, e.g.,][Chapter 6]{Szepesvari2018algs}. However, a glaring difference between the the settings is that in the latter the regret of each agent scales with $1/ \Delta^2$, where $\Delta$ is the minimum reward gap between the optimal match and a suboptimal arm across all agents. In other words, the regret of an agent might depend on the suboptimality gap of other agents. Example~\ref{example:delta2} shows that this dependence is real in general and not an artifact of our analysis. Moreover, while single agent ETC achieves $\Ocal(\sqrt{\horizon})$ problem-independent regret, Example~\ref{example:delta2} shows that centralized ETC does not have this desirable property. Finally, $\sum_{j = 1}^K \optgap{i,j}$ could be negative for some agents. Therefore, some agents can have negative agent-optimal regret, an effect that never occurs in the single agent MAB problem. 
 
\begin{example}[The dependence on $1/\Delta^2$ cannot be improved in general] 
  \label{example:delta2}
  Let $\players = \{\player{1}, \player{2}\}$ and $\arms = \{\arm{1}, \arm{2} \}$ with true preferences:
  \begin{align*}
    &\player{1} \colon \arm{1} \succ \arm{2} \quad &\arm{1} \colon \player{1} \succ \player{2}\\
    &\player{2} \colon \arm{2} \succ \arm{1} \quad &\arm{2} \colon \player{1} \succ \player{2}. 
  \end{align*}
  The agent-optimal stable matching is given by $\optmatch{1} = 1$ and $\optmatch{2} = 2$. Both $\arm{1}$ and $\arm{2}$ prefer $\player{1}$ over $\player{2}$. Therefore, at the end of the exploration stage $\player{1}$ is matched to their top choice arm while $\player{2}$  is  matched to the remaining arm. In order for $\player{2}$ to be matched to their optimal arm, $\player{1}$ must correctly determine that they prefer $\arm{2}$ over $\arm{1}$. The number of exploration rounds would then have to be $\Omega(1/\optgap{1, 2}^2)$ where $\optgap{1,2} = \meanreward{1}{2} - \meanreward{1}{2}$. Hence, when $\optgap{1,2} \leq 1/ \sqrt{\horizon}$, the regret of $\player{2}$ is $\Omega(\horizon \optgap{2,1})$. Figure~\ref{fig:gap} depicts this effect empirically; we observe that a smaller gap $\optgap{1,2}$ causes $\player{1}$ to have larger regret. 
\end{example}

	\subsubsection{Proof of Theorem \ref{thm:centralized_etc}}\label{sec:proof-etc}

First we present two instructive lemmas that are used in the proof of Theorem~\ref{thm:centralized_etc},
Throughout the remainder of this section, we say the ranking $\prefsagent{i}{t}$ submitted by $\player{i}$ at time $t$ is \emph{valid} if whenever an arm $a_j$ is ranked higher than $\optmatch{i}$, i.e.  $\prefagent{i}{j}{t} < \prefagent{i}{\optmatch{i}}{t}$, it follows that $\meanreward{i}{j} > \meanreward{i}{\optmatch{i}}$. 

\begin{lemma}
	\label{lem:opt_valid_rankings}
	If all the agents submit valid rankings to the planner, then the GS-algorithm finds a match $m$ such that   $\meanreward{i}{m(i)} \geq \meanreward{i}{\optmatch{i}}$ for all players $\player{i}$.
\end{lemma}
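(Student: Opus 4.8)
The plan is to compare the matching $m$ produced by the Gale-Shapley planner with the true agent-optimal matching $\optmatches$, by showing that $\optmatches$ is itself stable under the \emph{submitted} preference profile, and then exploiting the agent-optimality of the GS output. Write $\mathcal{P}$ for the preference profile on which the planner runs GS: the submitted agent rankings $\prefsagent{i}{t}$ for $i \in [\numagents]$ together with the true, fixed arm rankings $\prefsarm{j}$. Let $m$ be the agent-optimal stable matching of $\mathcal{P}$ returned by the planner.

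The key step, and the main obstacle, is to prove that $\optmatches$ is stable with respect to $\mathcal{P}$. I would argue by contradiction: suppose $(\player{i}, \arm{j})$ is a blocking pair for $\optmatches$ in $\mathcal{P}$. By definition of a blocking pair, $\player{i}$ ranks $\arm{j}$ strictly above its assigned arm $\optmatch{i} = \optmatches(\player{i})$ in the submitted ranking, that is $\prefagent{i}{j}{t} < \prefagent{i}{\optmatch{i}}{t}$, and $\arm{j}$ prefers $\player{i}$ to its current partner under $\optmatches$. Validity of the submitted ranking applied to the first condition yields $\meanreward{i}{j} > \meanreward{i}{\optmatch{i}}$, i.e. $\player{i}$ truly prefers $\arm{j}$ to $\optmatch{i}$. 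Since the arm rankings in $\mathcal{P}$ are the true ones, $(\player{i}, \arm{j})$ is then a blocking pair for $\optmatches$ under the true preferences, contradicting the stability of the true agent-optimal matching. Hence $\optmatches$ is stable in $\mathcal{P}$. The subtlety here is that validity constrains only arms ranked above $\optmatch{i}$, but these are exactly the arms relevant to a blocking deviation, precisely because $\optmatches$ assigns $\player{i}$ to $\optmatch{i}$.

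Finally I would combine this with the agent-optimality of the GS output (the property, cited in the problem setting, that the proposing side's GS matching is weakly preferred by every proposer to any stable matching of the same profile). Applied to $\mathcal{P}$, this gives $\prefagent{i}{m(i)}{t} \le \prefagent{i}{\optmatch{i}}{t}$ for every $\player{i}$, so each agent weakly prefers $m(\player{i})$ to $\optmatch{i}$ in its submitted ranking. It remains to translate this into the true rewards: if $m(\player{i}) = \optmatch{i}$ then $\meanreward{i}{m(i)} = \meanreward{i}{\optmatch{i}}$, while if $m(\player{i})$ is ranked strictly above $\optmatch{i}$, validity again gives $\meanreward{i}{m(i)} > \meanreward{i}{\optmatch{i}}$. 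In either case $\meanreward{i}{m(i)} \ge \meanreward{i}{\optmatch{i}}$, which is the claim.
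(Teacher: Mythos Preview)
Your proposal is correct and follows essentially the same approach as the paper: show that the true agent-optimal matching $\optmatches$ is stable under the submitted profile (your contradiction argument is the contrapositive of the paper's direct check that any arm ranked above $\optmatch{i}$ in the submitted ranking still prefers its $\optmatches$-partner), then invoke agent-optimality of agent-proposing GS. Your final step, translating the submitted-preference inequality back to true rewards via validity, is slightly more explicit than the paper's one-line conclusion, but the argument is the same.
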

\begin{proof}
	First we show that true agent optimal matching $\overline{m}$ is stable according to the rankings submitted by the agents when all those rankings are valid. Let $\arm{j}$ be an arm such that $\prefagent{i}{j}{t}  < \prefagent{i}{\optmatch{i}}{t}$ for an agent $\player{i}$. Since $\prefsagent{i}{t}$ is valid, it means $\player{i}$ prefers $\arm{j}$ over $\optmatch{i}$ according to the true preferences also. However, since $\overline{m}$ is stable according to the true preferences, arm $\arm{j}$ must prefer player $\overline{m}^{-1}(j)$ over $\player{i}$, where $\overline{m}^{-1}(j)$ is $\arm{j}$'s match according to $\overline{m}$ or the emptyset if $\arm{j}$ does not have a match. Therefore, according to the ranking $\prefsagent{i}{t}$, $\player{i}$ has no incentive to deviate to arm $\arm{j}$ because that arm would reject her.
	Now, since $\overline{m}$ is stable according to the rankings $\prefsagent{i}{t}$, we know that the GS-algorithm will output a matching which is at least as good as $\overline{m}$ for all agents according to the rankings $\prefsagent{i}{t}$. Since all the rankings are valid, it follows that the GS-algorithm will output a matching $m$ which is as least as good as $\overline{m}$ according to the true preferences also, i.e., $\meanreward{i}{m(i)} > \meanreward{i}{\optmatch{i}}$.  
\end{proof}

\begin{lemma}
	\label{lem:prob_invalid}
	Consider the agent $\player{i}$ and let $\optgap{i,j} = \meanreward{i}{\optmatch{i}} - \meanreward{i}{j}$ and $\optgap{i, \min} = \min_{j \colon \optgap{i,j} > 0} \optgap{i,j}$.   Then, if $\player{i}$ follows the Explore-then-Commit platform (see Table~\ref{table:algs}(a)), we have
	\begin{align*}
	\PP(\prefsagent{i}{h \numarms} \text{ is invalid }) \leq \numarms e^{- \frac{h \optgap{i,\min}^2}{2}}. 
	\end{align*}
\end{lemma}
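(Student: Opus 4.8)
The plan is to translate the combinatorial notion of an invalid ranking into a concrete event about the empirical means, and then to control that event by a union bound together with sub-Gaussian concentration. First I would record the one structural fact I need about the exploration phase: during the first $h\numarms$ rounds the platform assigns the agents to arms cyclically and conflict-free, so by time $h\numarms$ each agent has pulled every arm exactly $h$ times, i.e. $T_{i,j}(h\numarms)=h$ for all $j$. Consequently each $\hatmeanreward{i}{j}{h\numarms}$ is the average of exactly $h$ i.i.d. $1$-sub-Gaussian samples with mean $\meanreward{i}{j}$, and these averages are mutually independent across the arms $j$.

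Write $o := \optmatch{i}$ for the optimal match. Since $\player{i}$ ranks arms in decreasing order of the empirical means $\hatmeanreward{i}{j}{h\numarms}$, an arm $\arm{j}$ is placed above $o$ in $\prefsagent{i}{h\numarms}$ exactly when $\hatmeanreward{i}{j}{h\numarms} > \hatmeanreward{i}{o}{h\numarms}$. By the definition of validity (and because the preferences are strict, hence the means are distinct), $\prefsagent{i}{h\numarms}$ is invalid precisely when some arm $\arm{j}$ with $\meanreward{i}{j} < \meanreward{i}{o}$, equivalently $\optgap{i,j}>0$, is ranked above $o$. Thus
\[
\PP(\prefsagent{i}{h\numarms}\text{ is invalid}) = \PP\Bigl(\exists\, j:\ \optgap{i,j}>0\ \text{and}\ \hatmeanreward{i}{j}{h\numarms} > \hatmeanreward{i}{o}{h\numarms}\Bigr).
\]
This is the crux of the argument: it reduces the validity condition to an inversion between the empirical means of $o$ and a genuinely worse arm.

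I would then apply a union bound over the at most $\numarms$ arms with $\optgap{i,j}>0$, which leaves me to bound $\PP(\hatmeanreward{i}{j}{h\numarms} > \hatmeanreward{i}{o}{h\numarms})$ for a single such $j$. The difference $\hatmeanreward{i}{j}{h\numarms} - \hatmeanreward{i}{o}{h\numarms}$ is itself a mean-zero-shifted average of $h$ independent terms, with mean $\meanreward{i}{j} - \meanreward{i}{o} = -\optgap{i,j}$ and sub-Gaussian fluctuations; hence its upper tail at level $\optgap{i,j}$ decays exponentially in $h\,\optgap{i,j}^2$ by the standard sub-Gaussian (Hoeffding) bound. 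Using $\optgap{i,j}\ge \optgap{i,\min}$ for every arm appearing in the union, each term is dominated by the same exponential in $\optgap{i,\min}$, and summing the $\le \numarms$ terms yields a bound of the claimed form $\numarms e^{-h\optgap{i,\min}^2/2}$.

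The only genuinely delicate point is the concentration step: because the inversion event compares two empirical means rather than a single mean against a fixed threshold, I must treat $\hatmeanreward{i}{j}{h\numarms} - \hatmeanreward{i}{o}{h\numarms}$ as one mean-zero sub-Gaussian average, exploiting that its summands are independent across the two arms, and track the resulting variance proxy carefully; this bookkeeping is exactly what pins down the constant in the exponent. Everything else—the cyclic-exploration count $T_{i,j}(h\numarms)=h$, the reduction via the ranking rule, and the union bound over the suboptimal arms—is routine. I would also flag that distinctness of the means (strict preferences) is what guarantees the union ranges only over arms strictly worse than $o$, so that every gap entering the bound is at least $\optgap{i,\min}$.
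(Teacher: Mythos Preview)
Your proposal is correct and follows the same approach as the paper: reduce invalidity to an inversion $\hatmeanreward{i}{j}{h\numarms} > \hatmeanreward{i}{o}{h\numarms}$ for some arm with $\optgap{i,j}>0$, bound each such event via sub-Gaussian concentration on the difference of the two independent $h$-sample averages, and union-bound over at most $\numarms$ arms. One small caveat on the bookkeeping you defer: the difference of two independent averages of $h$ i.i.d.\ $1$-sub-Gaussian samples is $\sqrt{2/h}$-sub-Gaussian, so the per-arm bound comes out as $e^{-h\optgap{i,\min}^2/4}$ rather than $e^{-h\optgap{i,\min}^2/2}$---this is in fact exactly what the paper's own proof derives, the $/2$ in the lemma statement notwithstanding.
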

\begin{proof}
	Throughout this proof we denote $t = h\numarms$ as a shorthand. 
	In order for the ranking $\prefsagent{i}{t}$ to not be valid there must exist an arm $\arm{j}$ such that $\meanreward{i}{\optmatch{i}} > \meanreward{i}{j}$, but $\prefagent{i}{j}{t} < \prefagent{i}{\optmatch{i}}{t}$. This can happen only when $\hatmeanreward{i}{j}{t} \geq \hatmeanreward{i}{\optmatch{i}}{t}$. The probability of this event is equal to 
	\begin{align*}
	\PP\left( \hatmeanreward{i}{j}{t} \geq \hatmeanreward{i}{\optmatch{i}}{t} \right) &= \PP\left( \hatmeanreward{i}{\optmatch{i}}{t} - \meanreward{i}{\optmatch{i}} -  \hatmeanreward{i}{j}{t} + \meanreward{i}{j} \leq \meanreward{i}{j} - \meanreward{i}{\optmatch{i}} \right) \\
	&\leq \PP\left( \hatmeanreward{i}{\optmatch{i}}{t} - \meanreward{i}{\optmatch{i}} -  \hatmeanreward{i}{j}{t} + \meanreward{i}{j} \leq \optgap{i, \min} \right).
	\end{align*}
	Since each agent pulls each arm exactly $h$ times during the exploration stage and since the rewards from each arm are $1$-sub-Gassian, we know that $\hatmeanreward{i}{j^\prime}{t} - \meanreward{i}{j^\prime} -  \hatmeanreward{i}{j}{t} + \meanreward{i}{j}$
	is $\sqrt{2 / h}$-sub-Gaussian. Therefore,
	\begin{align*}
	\PP\left( \hatmeanreward{i}{j}{t} \geq \hatmeanreward{i}{\optmatch{i}}{t} \right) &\leq e^{-\frac{h \Delta_i^2}{4}}. 
	\end{align*}
	The conclusion follows by a union bound over all possible arms $\arm{j}$.
\end{proof}

\begin{proof}[Proof of Theorem~\ref{thm:centralized_etc}]
	During the exploration stage each player $\player{i}$ pulls each arm $\arm{j}$ exactly $h$ times. Therefore, the expected agent-optimal stable regret of agent $\player{i}$ after the first $h\numarms$ time steps is exactly equal to $h \sum_{j = 1}^\numarms \optgap{i,j}$ (note that $\optgap{i,j}$ might be negative for some values of $j$). The agent-optimal stable regret $\player{i}$ from time $h\numarms + 1$ to time $\horizon$ is at most $(\horizon - h \numarms)\optgap{i, \max}$. However, from Lemma~\ref{lem:opt_valid_rankings} we know that $\player{i}$ can incurr positive regret only if there exists a player who submits an invalid ranking at time $h \numarms + 1$. Lemma~\ref{lem:prob_invalid}, together with a union bound over all agents, ensures that the probability there exists a player who submits an invalid ranking is at most $N \exp \left(- \frac{h \Delta^2}{4}\right)$. This completes the proof. 
\end{proof}

\subsection{Centralized UCB}
\label{sec:centralized_ucb}

In the previous section we saw that centralized ETC achieves $\Ocal(\log(\horizon))$ agent-optimal regret for all agents. However, centralized ETC must know the horizon $\horizon$ and the minimum gap $\Delta$ between an optimal arm and a suboptimal arm. While knowing the horizon $\horizon$ is feasible in certain scenarios, knowing $\Delta$ is not plausible. It is known that single agent ETC achieves $\Ocal(\horizon^{2/3})$ when the number of exploration rounds is chosen deterministically without knowing $\Delta$, and there are also known methods for adaptively choosing the number of exploration rounds so that single agent ETC achieves $\Ocal(\log(n))$ \cite{Szepesvari2018algs}. However, in our setting, the $\Ocal(\horizon^{2/3})$ guarantee does not hold because the suboptimality gaps of one agent affect the regret of other agents, and the known adaptive stopping times cannot be implemented because the platform does not observe the agents' rewards. Therefore, it is necessary to find methods which do not need to know $\Delta$. 

Another drawback of centralized ETC is that it requires agents to learn concurrently. It thus does not take prior knowledge of preferences into account and forces that player to explore arms which might be suboptimal for them. The Gale-Shapley Platform shown in Table~\ref{table:algs}(right) resolves this problem, always outputting the agent-optimal matching given the rankings received from the agents. We derive an upper bound on the regret in this setting when all agents use upper confidence bounds to rank arms. In Section~\ref{sec:honesty} we show this method is incentive compatible.

Before proceeding with the analysis we define more precisely the UCB method employed by each agent and also introduce several technical concepts. 
At each time step the platform matches agent $\player{i}$ with arm $\action{t}{i}$. Each player $\player{i}$ successfully pulls arm $\action{t}{i}$, receives reward $\reward{i}{\actions{t}}{t}$, and updates their empircal mean for $\action{t}{i}$ as in \eqref{eq:empirical_mean}. They then compute the upper confidence bound
\begin{align}
 \label{eq:ucb}
  \ucb{i}{j}{t} = \left\{
  \begin{array}{ll}
    \infty & \text{if } T_{i,j}(t) = 0,\\
    \hatmeanreward{i}{j}{t} + 
    \sqrt{\frac{3\log t}{2T_i(t-1)}} & \text{otherwise.} 
   \end{array}\right.
\end{align}
Finally, each player $\player{i}$  orders the arms according to  $\ucb{i}{j}{t}$ and computes the ranking $\prefsagent{i}{t + 1}$ so that a higher upper confidence bound means a better rank, e.g. $\arg\max_j \ucb{i}{j}{t}$ is ranked first in $\prefsagent{i}{t + 1}$. 

Let $m$ be an injective function from the set of players $\players$  to the set of arms $\arms$; hence $m$ is the matching where $m(i)$ is the match of agent $i$. Then, let $T_m(t)$ be the number of times matching $m$ is played by time $t$. For a matching $m$ to be played at time $t$ it must be stable according to the current preference rankings of the agents and the fixed rankings of the arms, i.e. according to $\prefsagent{i}{t}$ for all $\player{i} \in \players$ and $\pi_j$ for all $\arm{j} \in \arms$. We call such matchings \emph{achievable}.  We say a matching is \emph{truly stable} if it is stable according to the true preferences induced by the mean rewards of the arms. For agent $\player{i}$ and arm $\player{\ell}$ we consider the set $M_{i,\ell}$  of non-truly stable, achievable matchings $m$ such that $m(i) = \ell$. Let $\pessgap{i}{\ell}= \meanreward{i}{\pessmatch{i}} - \meanreward{i}{\ell}$. 

Then, since any truly-stable matching yields regret smaller or equal than zero for all agents,  we can upper bound the regret of agent $i$ as follows:
\begin{align}
  \label{eq:ucb_regret_decomposition}
\pessregret{i}(\horizon) \leq \sum_{\ell \colon \pessgap{i}{\ell} > 0} \pessgap{i}{\ell} \left( \sum_{m \in M_{i,\ell}} \EE T_m(\horizon)\right). 
\end{align}
For any matching $m$ that is non-truly stable there must exist an agent $\player{j}$ and an arm $\arm{k}$, different from arm $m(j)$, such that the pair $(\player{j}, \arm{k})$ is a \emph{blocking pair} according to the true preferences $\mu$, i.e. $\meanreward{j}{k} > \meanreward{j}{m(j)}$ and arm $\arm{k}$ is either unmatched or $\prefarm{k}{j} < \prefarm{k}{m^{-1}(k)}$. We say the triplet $(\player{j}, \arm{k}, \arm{k^\prime})$ is blocking when $\player{j}$ is matched with $\arm{k^\prime}$ and the pair $(\player{j}, \arm{k})$ is blocking.
Let $\blockedmatches{j}{k}{k^\prime}$ be the set of all matches blocked by the triplet $(\player{j}, \arm{k}, \arm{k^\prime})$. Given a set $S$ of matchings, we say a set $Q$ of triplets $(\player{j}, \arm{k}, \arm{k^\prime})$ is a \emph{cover} of $S$ if
\begin{align*}
\bigcup_{(\player{j}, \arm{k}, \arm{k^\prime}) \in Q} \blockedmatches{j}{k}{k^\prime} \supseteq S.
\end{align*}
Let $\calC(S)$ denote the set of covers of $S$. Also, let $\gaptrip{j}{k}{k^\prime} = \meanreward{j}{k} - \meanreward{j}{k^\prime}$. Now we state our result.  

\begin{theorem}
  \label{thm:meta}
  When all agents rank arms according to the upper confidence bounds \eqref{eq:ucb} and submit their preferences to the Gale-Shapley Platform, the regret of agent $\player{i}$ up to time $\horizon$ satisfies 
	\begin{align*}
	\pessregret{i}(\horizon) \leq \sum_{\ell \colon \pessgap{i}{\ell} > 0} \pessgap{i}{\ell} \left[\min_{Q \in \calC(M_{i, \ell})} \sum_{(\player{j}, \arm{k}, \arm{k^\prime}) \in Q} \left(5 + \frac{6 \log (n)}{\gaptrip{j}{k}{k^\prime}^2}\right) \right].
	\end{align*}
\end{theorem}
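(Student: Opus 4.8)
The plan is to start from the regret decomposition \eqref{eq:ucb_regret_decomposition} and reduce the entire statement to a single-agent UCB pull-count bound, one for each blocking triplet. First I would exploit the cover structure. Fix $\ell$ with $\pessgap{i}{\ell}>0$ and any cover $Q\in\calC(M_{i,\ell})$. Since $Q$ covers $M_{i,\ell}$, we have $M_{i,\ell}\subseteq\bigcup_{(\player{j},\arm{k},\arm{k^\prime})\in Q}\blockedmatches{j}{k}{k^\prime}$, so summing the nonnegative counts $T_m$ yields
\[
\sum_{m\in M_{i,\ell}} \EE\, T_m(\horizon) \;\le\; \sum_{(\player{j},\arm{k},\arm{k^\prime})\in Q}\ \sum_{m\in \blockedmatches{j}{k}{k^\prime}}\EE\, T_m(\horizon).
\]
Because this holds for every cover, I may take the minimum over $Q\in\calC(M_{i,\ell})$ at the very end. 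It then suffices to prove the per-triplet bound $\sum_{m\in \blockedmatches{j}{k}{k^\prime}}\EE\,T_m(\horizon)\le 5 + 6\log(\horizon)/\gaptrip{j}{k}{k^\prime}^2$, which inserted into \eqref{eq:ucb_regret_decomposition} gives the theorem.

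The key step, and the bridge between the market structure and the bandit analysis, is the following reduction. I claim that whenever a matching $m\in\blockedmatches{j}{k}{k^\prime}$ is played at time $t$, agent $\player{j}$ pulls $\arm{k^\prime}$ and $\ucb{j}{k^\prime}{t}\ge\ucb{j}{k}{t}$. Indeed, $m\in\blockedmatches{j}{k}{k^\prime}$ means $m(j)=k^\prime$ and $(\player{j},\arm{k})$ is a blocking pair under the true preferences, so $\meanreward{j}{k}>\meanreward{j}{k^\prime}$ and $\arm{k}$ prefers $\player{j}$ to its own match under $m$. For $m$ to be played it must be achievable, i.e.\ stable with respect to the submitted UCB rankings and the fixed arm rankings. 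Since the arm rankings are unchanged, $\arm{k}$ still prefers $\player{j}$; hence the only way $(\player{j},\arm{k})$ can fail to be blocking under the UCB rankings is that $\player{j}$ does not rank $\arm{k}$ above $\arm{k^\prime}=m(j)$, that is $\ucb{j}{k^\prime}{t}\ge\ucb{j}{k}{t}$. As exactly one matching is played per round, $\sum_{m\in \blockedmatches{j}{k}{k^\prime}}T_m(\horizon)$ equals the number of rounds on which the played matching lies in $\blockedmatches{j}{k}{k^\prime}$, so
\[
\sum_{m\in \blockedmatches{j}{k}{k^\prime}}T_m(\horizon)\ \le\ \sum_{t=1}^{\horizon}\indi\{\action{t}{j}=k^\prime,\ \ucb{j}{k^\prime}{t}\ge\ucb{j}{k}{t}\}.
\]

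Finally I would bound the right-hand side by a standard single-agent UCB argument with $\arm{k}$ as the reference (higher-mean) arm, $\arm{k^\prime}$ as the suboptimal arm, and gap $\Delta=\gaptrip{j}{k}{k^\prime}>0$. Here I use that the Gale--Shapley platform produces conflict-free matchings, so each time $\player{j}$ is matched it collects a fresh $1$-sub-Gaussian reward; thus after $s$ pulls the estimate $\hatmeanreward{j}{k^\prime}{\cdot}$ is an average of $s$ i.i.d.\ samples and the usual sub-Gaussian tail bounds apply irrespective of how the matches were formed. Choosing the threshold $u=\lceil 6\log(\horizon)/\Delta^2\rceil$ makes the confidence width of $\arm{k^\prime}$ drop below $\Delta/2$ once it has been pulled $u$ times; on the counted rounds with more than $u$ pulls the index comparison then forces either an under-estimate of $\arm{k}$ (its UCB falls below $\meanreward{j}{k}$) or an over-estimate of $\arm{k^\prime}$ (its empirical mean exceeds $\meanreward{j}{k^\prime}+\Delta/2$). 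Summing the sub-Gaussian tail probabilities of these two events over the relevant pull counts contributes the additive constant, yielding $5 + 6\log(\horizon)/\Delta^2$, and combining with the previous two displays completes the proof.

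The main obstacle is the reduction in the second paragraph: one must argue precisely that combinatorial stability of $m$ under the UCB rankings, together with the fact that arm preferences are fixed and used truthfully, pins the entire market outcome down to a single scalar index comparison $\ucb{j}{k^\prime}{t}\ge\ucb{j}{k}{t}$ for the one agent $\player{j}$ appearing in the triplet. Once this is in hand the analysis collapses to the familiar single-agent UCB pull count, and the cover construction guarantees that each triplet is charged only once, even though a single triplet may block many matchings and many triplets may block the same matching. The sub-Gaussian concentration giving the exact constants is routine; the delicate bookkeeping is aligning the cover decomposition with the regret decomposition so that the per-$\ell$ minimum over covers survives intact into the final bound.
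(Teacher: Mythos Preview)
Your proposal is correct and follows essentially the same route as the paper: start from the decomposition \eqref{eq:ucb_regret_decomposition}, use the cover to pass to the triplet counts $\sum_{m\in\blockedmatches{j}{k}{k^\prime}}T_m(\horizon)$, argue via achievability that on each such round $\player{j}$ must rank $\arm{k^\prime}$ above $\arm{k}$ under UCB, and finish with the standard single-agent UCB pull-count bound. Your write-up is in fact more explicit than the paper's on two points the paper leaves terse---the union bound over the cover and the stability argument that converts the blocking-pair condition into the scalar index comparison---but the underlying ideas are identical.
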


Theorem~\ref{thm:meta} offers a problem-dependent $\Ocal(\log(n))$ upper bound guarantee on the agent-pessimal stable regret of each agent $\player{i}$. Similarly to the case of centralized ETC, the regret of one agent depends on the suboptimality gaps of other agents. However, we saw in Section~\ref{sec:centralized_etc} that centralized ETC achieves $\Ocal(\log(n))$ agent-optimal stable regret, a stronger notion of regret.  Example~\ref{example:pessimal_regret} shows that centralized UCB cannot yield sublinear agent-optimal stable regret in general.   

\begin{example}[Centralized UCB does not achieve sublinear agent-optimal stable regret]
  \label{example:pessimal_regret}
	Let $\players = \{ \player{1}, \player{2}, \player{3} \}$ and $\arms = \{ a_1, a_2, a_3 \}$, with true preferences given by:
	\begin{align*}
	&\player{1}: \arm{1} \succ \arm{2} \succ \arm{3} &\quad \arm{1}: \player{2} \succ \player{3} \succ \player{1} \\
	&\player{2}: \arm{2} \succ \arm{1}  \succ \arm{3} &\quad \arm{2}: \player{1} \succ \player{2} \succ \player{3} \\
	&\player{3}: \arm{3} \succ \arm{1} \succ \arm{2} &\quad \arm{3}: \player{3} \succ \player{1} \succ \player{2}.
	\end{align*}
        The agent-optimal stable matching is $(\player{1}, \arm{1})$, $(\player{2}, \arm{2})$, $(\player{3}, \arm{3})$. When $\player{3}$ incorrectly ranks $\arm{1} \succ \arm{3}$ and the other two players submit their correct rankings, the Gale-Shapley Platform outputs the matching $(\player{1}, \arm{2})$, $(\player{2}, \arm{1})$, $(\player{3}, \arm{3})$. In this case $\player{3}$ will never correct their mistake because they never get matched with $\arm{1}$ again, and hence their upper confidence bound for $\arm{1}$ will never shrink. Figure~\ref{fig:three} illustrates this example; the optimal regret for $\player{1}$ and $\player{2}$ is seen to be linear in $\horizon$.
\end{example}

\begin{proof}[Proof of Theorem~\ref{thm:meta}]

Let $\pullstrips{j}{k}{k^\prime}{\horizon}$ be the number of times agent $\player{j}$ pulls arm $\arm{k^\prime}$ when the triplet $(\player{j}, \arm{k}, \arm{k^\prime})$ is blocking the matching selected by the platform. Then, by definition
\begin{align}
\label{eq:count_bmatches}
\sum_{m \in \blockedmatches{j}{k}{k^\prime}} T_m(\horizon) = \pullstrips{j}{k}{k^\prime}{\horizon}.
\end{align}
By the definition of a blocking triplet we know that if $\player{j}$ pulls  $\arm{k^\prime}$ when $(\player{j}, \arm{k}, \arm{k^\prime})$ is blocking, they must have a higher upper confidence bound for $\arm{k^\prime}$ than for $\arm{k}$. In other words, we are trying to upper bound the expected number of times the upper confidence bound on $\arm{k^\prime}$ is higher than that of the better arm $\arm{k}$ when we have the guarantee that each time this event occurs $\arm{k^\prime}$ is successfully pulled. Therefore, standard analysis for the single agent UCB \citep[e.g.,][Chap. 2]{Bubeck12regret} shows that 
\begin{equation}\label{eqn:ucb}
\EE L_{j,k,k^\prime}(\horizon) \leq 5 + \frac{6 \log(n)}{\gaptrip{j}{k}{k^\prime}^2}.
\end{equation} The conclusion follows from equations \eqref{eq:ucb_regret_decomposition} and \eqref{eq:count_bmatches}.
\end{proof}
 
To better understand the guarantee of Theorem~\ref{thm:meta} we consider two examples in which the markets have a special structure which enables us to simplify the upper bound on the regret. Moreover, in Corollary~\ref{cor:worst_case} we consider the  a worst case upper bound over possible coverings of matchings. 

\begin{example}[Global preferences]
  \label{example:global}
  Let $\players = \{\player{1}, \cdots, \player{\numagents}\}$ and $\arms=\{\arm{1}, \cdots, \arm{\numarms}\}$. We assume the following preferences: $ \player{i}: a_1 \succ \cdots \succ a_\numarms$ and $ \arm{j}: \player{1} \succ \cdots \succ \player{\numagents}$. In other words all agents have the same ranking over arms, and all arms have the same ranking over agents. Hence, the unique stable matching is $(\player{1}, a_1)$, $(\player{2}, a_2)$, \ldots, $(\player{\numagents}, a_\numagents)$. Moreover, for any $\player{i}$ and $\arm{\ell}$ we can cover the set of matchings $M_{i, \ell}$ with the triplets $(\player{i}, \arm{k}, \arm{\ell})$ for all $k$ with $1 \leq k \leq i$. Then, Theorem~\ref{thm:meta} implies \eqref{eq:global_bound} once we observe that $\gaptrip{i}{k}{\ell} \geq \pessgap{i}{\ell}$ for all $k \leq i$. 
  \begin{align}
    \label{eq:global_bound}
\pessregret{i}(\horizon) \leq 5 i \sum_{\ell = i + 1}^\numarms \pessgap{i}{\ell} + \sum_{\ell = i + 1}^\numarms \frac{6 i \log(\horizon)}{\pessgap{i}{\ell}}. 
  \end{align} 
 Figure~\ref{fig:global} illustrates this example empirically, displaying the optimal (also pessimal) regret of 5 out of 20 agents. The 1st-ranked agent has sublinear regret, consistent with \eqref{eq:global_bound}, while the 20th-ranked agent has negative regret and our upper bound is indeed 0.
\end{example}

\begin{example}[Unique pairs]
  Let  $\players = \{\player{1}, \cdots, \player{\numagents}\}$ and $\arms=\{\arm{1}, \cdots, \arm{\numagents}\}$ and assume that agent $\player{i}$ prefers arm $\arm{i}$ the most and that arm $\arm{i}$ prefers agent $\player{i}$ the most. Therefore, the unique stable matching is $(\player{1}, a_1)$, $(\player{2},a_2)$, \ldots, $(\player{\numagents}, a_\numagents)$. Then, we can cover each set $M_{i, \ell}$ with the triplet $(\player{i}, \arm{i}, \arm{\ell})$. Therefore, Theorem~\ref{thm:meta} implies \eqref{eq:unique_pairs_bound}; note that the right-hand side is identical to the guarantee for single agent UCB: 
  \begin{align}
    \label{eq:unique_pairs_bound}
\pessregret{i}(\horizon) \leq 5 \sum_{\ell \neq i}^\numarms \pessgap{i}{\ell} + \sum_{\ell \neq i}^\numagents \frac{6 \log(\horizon)}{\pessgap{i}{\ell}}. 
 \end{align}
\end{example}

 \begin{corollary}
          \label{cor:worst_case}
Let  $\Delta = \min_i \min_{j, j^\prime} |\meanreward{i}{j} - \meanreward{i}{j^\prime}|$. When all players follow the centralized UCB method, the regret of $\player{i}$ can be upper bounded as follows 
  \begin{align*}
    \pessregret{i}(\horizon) \leq \max_{\ell} \Delta_{i, \ell} \left(6 \numagents \numarms^2 + 12 \frac{\numagents \numarms \log (\horizon) }{\Delta^2}\right).
  \end{align*}
\end{corollary}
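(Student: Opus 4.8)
The plan is to work from the regret decomposition \eqref{eq:ucb_regret_decomposition} rather than from the covered form in Theorem~\ref{thm:meta}, since a worst-case estimate is cleanest once the sum over $\ell$ is collapsed. First I would bound every pessimal gap by its maximum, $\pessgap{i}{\ell} \le \max_\ell \pessgap{i}{\ell}$ (the quantity written $\max_\ell \Delta_{i,\ell}$ in the statement), and pull it outside the sum. The sets $M_{i,\ell}$ are disjoint across $\ell$ and their union is contained in the set $\mathcal{M}$ of all non-truly-stable achievable matchings, so, using $\EE T_m(\horizon) \ge 0$,
\begin{align*}
\pessregret{i}(\horizon) \le \Big(\max_\ell \pessgap{i}{\ell}\Big) \sum_{m \in \mathcal{M}} \EE T_m(\horizon).
\end{align*}
Collapsing to a single sum over $\mathcal{M}$ this way is important: it avoids the spurious extra factor of $\numarms$ that a separate union bound for each $\ell$ would introduce.

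Next I would cover $\mathcal{M}$ by blocking triplets. Every non-truly-stable matching has at least one blocking triplet, so for any cover $Q$ of $\mathcal{M}$ we have $\mathcal{M} \subseteq \bigcup_{(\player{j},\arm{k},\arm{k'}) \in Q} \blockedmatches{j}{k}{k'}$, and \eqref{eq:count_bmatches} together with \eqref{eqn:ucb} gives
\begin{align*}
\sum_{m \in \mathcal{M}} \EE T_m(\horizon) \le \sum_{(\player{j},\arm{k},\arm{k'}) \in Q} \EE \pullstrips{j}{k}{k'}{\horizon} \le \sum_{(\player{j},\arm{k},\arm{k'}) \in Q}\left(5 + \frac{6\log(\horizon)}{\gaptrip{j}{k}{k'}^2}\right).
\end{align*}
Every blocking triplet satisfies $\gaptrip{j}{k}{k'} = \meanreward{j}{k} - \meanreward{j}{k'} > 0$, so by the definition of $\Delta$ we have $\gaptrip{j}{k}{k'} \ge \Delta$, and each denominator may be replaced by the uniform bound $\Delta^2$. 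Thus the right-hand side is at most $|Q|\,(5 + 6\log(\horizon)/\Delta^2)$, and it remains only to bound the cardinality of a suitable cover.

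The step I expect to be the main obstacle is exhibiting a cover whose size carries the correct powers of $\numagents$ and $\numarms$, namely $|Q| = \Ocal(\numagents\numarms)$ — in particular a single factor of $\numarms$, not $\numarms^2$, in the logarithmic term. Indexing the cover by the pair (agent, held arm) is the natural route: for each of the $\numagents$ agents $\player{j}$ and each of the $\numarms$ arms $\arm{k'}$ one would like to keep a single triplet $(\player{j},\arm{k},\arm{k'})$, giving $|Q| \le \numagents\numarms$ and hence $|Q|(5 + 6\log(\horizon)/\Delta^2) \le 6\numagents\numarms^2 + 12\,\numagents\numarms\log(\horizon)/\Delta^2$ after absorbing the constants. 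The difficulty is that a fixed arm $\arm{k}$ need not block every bad matching $m$ with $m(j)=k'$: blocking also requires $\arm{k}$ to prefer $\player{j}$ to its own partner $m^{-1}(k)$, a condition that depends on $m$ and not merely on the pair $(\player{j},\arm{k'})$. I would therefore recast the count directly at the level of pulls: assign each $m \in \mathcal{M}$ to a canonical blocking agent $\player{j}$ together with its held arm $\arm{k'} = m(j)$, so that $\sum_{m \in \mathcal{M}} \EE T_m(\horizon)$ is dominated by the sum over pairs $(\player{j},\arm{k'})$ of the expected number of rounds in which $\player{j}$ pulls the suboptimal arm $\arm{k'}$ while some truly preferred arm is blocking. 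Each such round forces $\player{j}$'s upper confidence bound on $\arm{k'}$ to dominate that of a truly better arm, so the single-agent UCB argument behind \eqref{eqn:ucb}, with every relevant gap at least $\Delta$, bounds this per-pair count by $5 + 6\log(\horizon)/\Delta^2$; summing over the at most $\numagents\numarms$ pairs and multiplying by $\max_\ell \pessgap{i}{\ell}$ yields the claim. The delicate point throughout is this merged pull count — that a single $\Ocal(1 + \log(\horizon)/\Delta^2)$ bound holds per (agent, arm) pair even though the blocking arm may change from round to round — whereas the gap-to-$\Delta$ substitution and the arithmetic of the constants are routine.
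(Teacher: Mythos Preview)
Your collapse of the sum over $\ell$ via the disjointness of the $M_{i,\ell}$ is exactly right, and is in fact a step the paper's one-line proof leaves implicit (a literal application of Theorem~\ref{thm:meta} with the same cover for every $\ell$ would cost an extra factor of $\numarms$). Where you diverge from the paper is in how you then control $\sum_{m\in\mathcal{M}} \EE T_m(\horizon)$. The paper takes the full cover $Q$ consisting of \emph{all} triplets $(\player{j},\arm{k},\arm{k'})$ with $\meanreward{j}{k}>\meanreward{j}{k'}$, so $|Q|=O(\numagents\numarms^2)$, and then observes that for fixed $(\player{j},\arm{k})$ the gaps $\gaptrip{j}{k}{k'}$, ordered over the admissible $k'$, are at least $\Delta,2\Delta,3\Delta,\ldots$; hence
\[
\sum_{k'\,:\,\meanreward{j}{k'}<\meanreward{j}{k}} \frac{1}{\gaptrip{j}{k}{k'}^2} \;\le\; \sum_{\ell\ge 1} \frac{1}{(\ell\Delta)^2} \;\le\; \frac{2}{\Delta^2}.
\]
Summing this over the $\numagents\numarms$ pairs $(\player{j},\arm{k})$ yields the $12\,\numagents\numarms\log(\horizon)/\Delta^2$ logarithmic term directly, while the $O(\numagents\numarms^2)$ cover size supplies the constant term. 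No merged pull-count is needed.

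Your alternative --- one pull-count per $(\player{j},\arm{k'})$ pair with a varying comparison arm --- is a legitimate route to the same bound, but the per-pair estimate you assert is not what the argument behind \eqref{eqn:ucb} actually delivers. That argument fixes the better arm $\arm{k}$; when $k$ is allowed to change from round to round, the bad event ``$u_{j,k_t}(t)<\meanreward{j}{k_t}$'' must be union-bounded over all candidate $k_t$, which turns the additive constant $5$ into $O(\numarms)$. Concretely: after the first $\lceil 6\log(\horizon)/\Delta^2\rceil$ pulls of $\arm{k'}$ by $\player{j}$, any further contributing round forces either the empirical mean of $\arm{k'}$ to overshoot by $\Delta/2$ (an $O(1)$ total contribution) or the UCB of the current blocking arm to fall below its mean, and summing $\PP(\exists k:\,u_{j,k}(t)<\meanreward{j}{k})$ over $t$ contributes $O(\numarms)$, not $O(1)$. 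So the honest per-pair bound is $O(\numarms)+6\log(\horizon)/\Delta^2$. Summed over the $\numagents\numarms$ pairs this still lands inside $6\numagents\numarms^2 + 12\,\numagents\numarms\log(\horizon)/\Delta^2$, so your endpoint survives, but the intermediate claim ``$5+6\log(\horizon)/\Delta^2$ per pair'' is unjustified as stated. The paper's Basel-sum device sidesteps this by never having to merge across a varying $k$.
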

\noindent
\emph{Proof} We consider the  covering $(j, k, k^\prime)$ composed of all possible triples with $\meanreward{j}{k} > \meanreward{j}{k^\prime}$. Then, Theorem~\ref{thm:meta} implies the result because
$
\sum_{k^\prime \colon \meanreward{j}{k^\prime} < \meanreward{j}{k}} \frac{1}{\Delta_{j,k,k^\prime}^2} \leq \sum_{\ell = 1}^\numarms \frac{1}{\ell^2 \Delta^2} \leq \frac{2}{\Delta^2}. 
$

\begin{figure}[t]
	\centering
	\begin{subfigure}[t]{\basefigwidth\textwidth}
		\centerline{\includegraphics[width=\columnwidth]{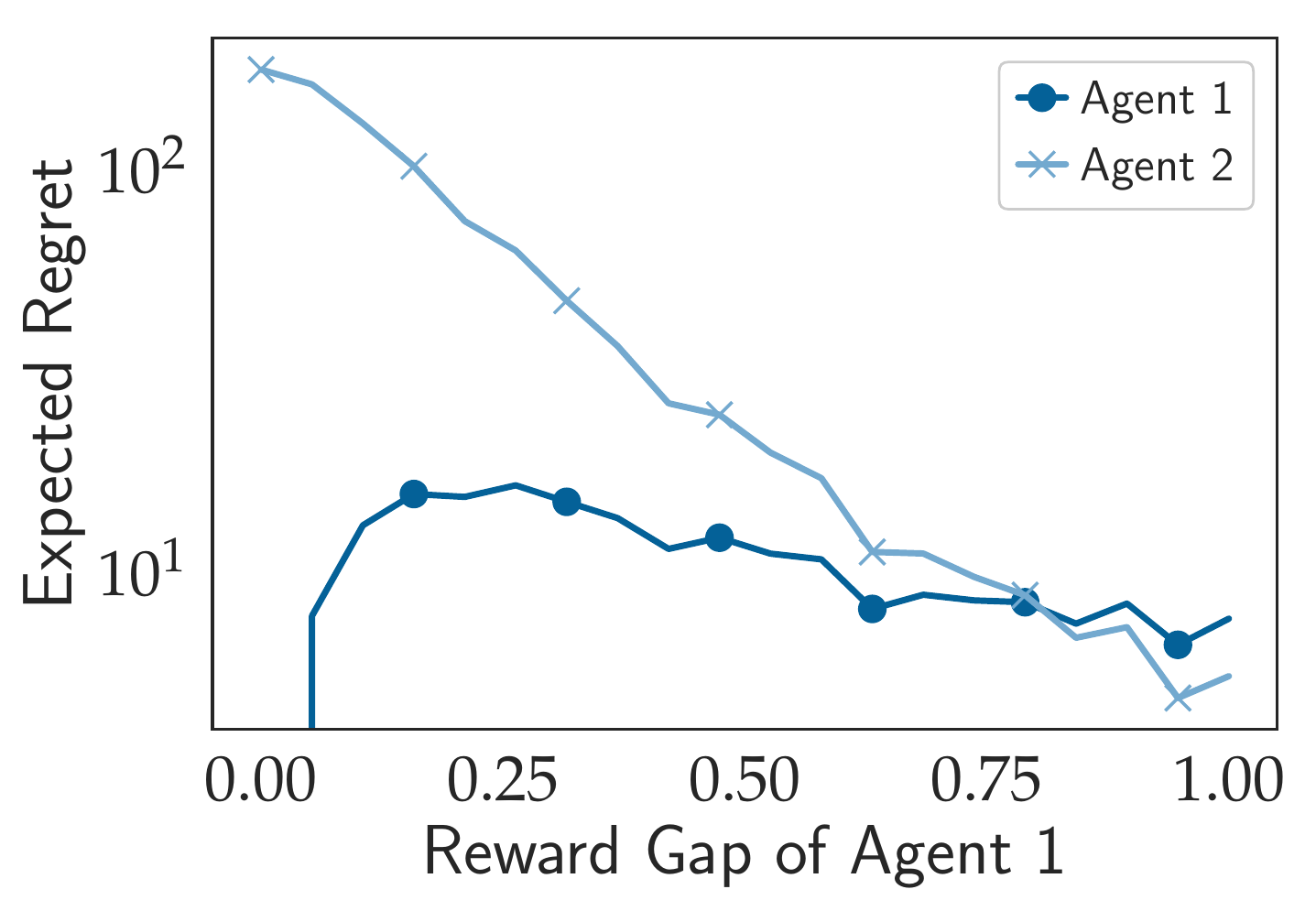}}
		\subcaption{\small Example~\ref{example:delta2}}
		\label{fig:gap}
	\end{subfigure}
	\begin{subfigure}[t]{\basefigwidth\textwidth}
		\centerline{\includegraphics[width=\columnwidth]{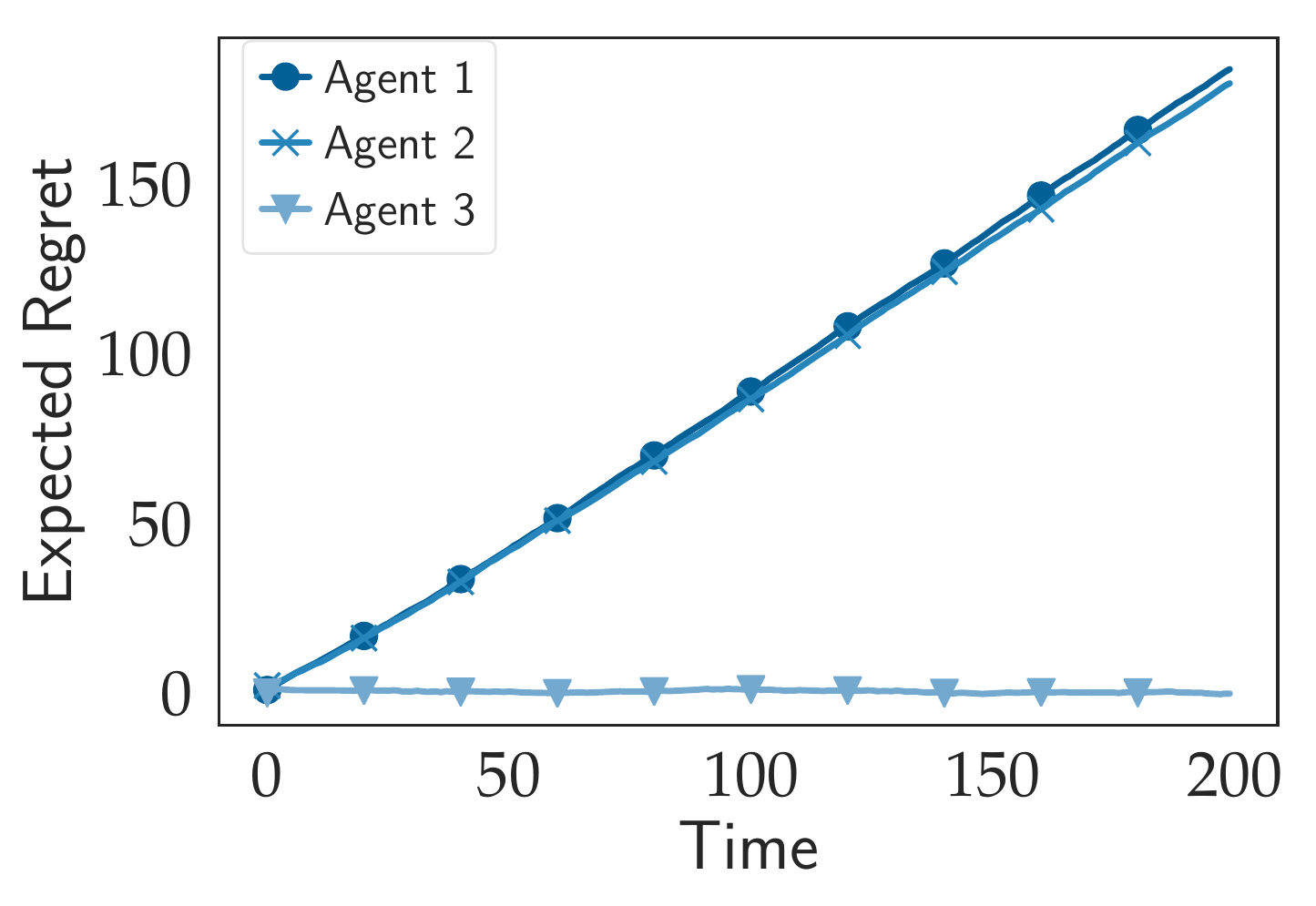}}
		\subcaption{\small Example~\ref{example:pessimal_regret}}
		\label{fig:three}
	\end{subfigure}
	\begin{subfigure}[t]{\basefigwidth\textwidth}
		\centerline{\includegraphics[width=\columnwidth]{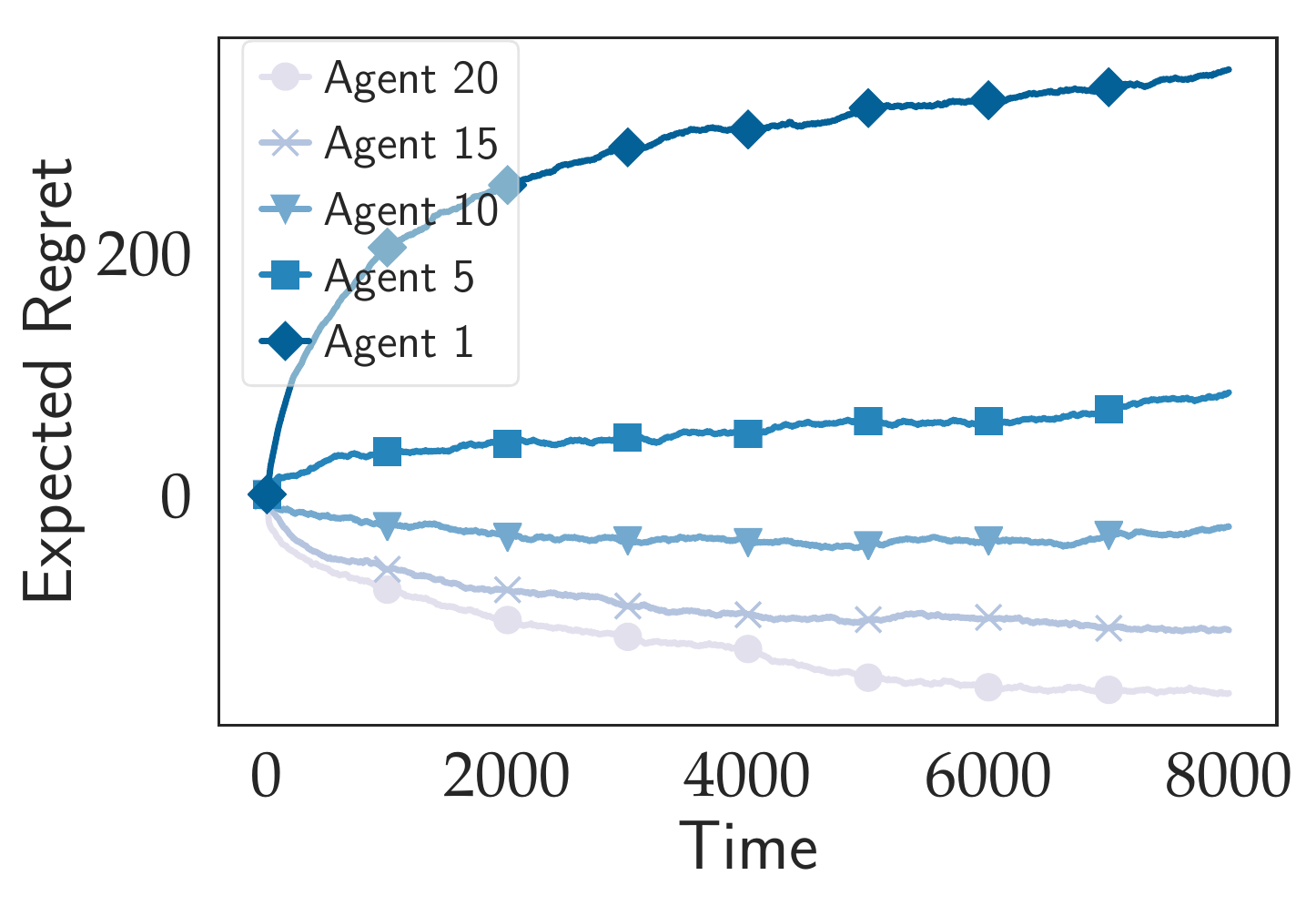}}
		\subcaption{\small Example~\ref{example:global}}
		\label{fig:global}
	\end{subfigure}
	\caption{\small The empirical performance of centralized UCB in the  settings described in Examples \ref{example:delta2}, \ref{example:pessimal_regret}, and  \ref{example:global}. See Appendix~\ref{app:exp_details} for the experimental details.} 
	\label{fig:ucb}
        \vspace{-15pt}
\end{figure}

\subsection{Honesty and Strategic Behavior}
\label{sec:honesty}

Classical results show that in the agent-proposing GS algorithm, no single agent can improve their match by misrepresenting their preferences, assuming that the other agents and arms submit their true preferences~\citep{roth1982economics,dubins1981mac}. The result generalizes to coalition of agents. Moreover, when there is a unique stable matching, the Dubins-Freedman Theorem says that no arms or agents can benefit from misrepresenting their preferences~\citep{dubins1981mac}.

The ETC Platform does not allows agents to choose which arms to explore. In this case, the classical results on honesty in agent-proposing GS apply; the agents are incentivized to submit the rankings according to their current mean estimates. 
When agents have some degree of freedom to explore over multiple rounds, it is no longer clear if any agents, or arms, can benefit from misrepresenting their preferences in some of the rounds.
In general, one agent's preferences can influence not only the matches of other agents, but also their reward estimates. One might be able to improve their regret by capitalizing on the ranking mistakes of other agents. The possibilities for long-term strategic behavior are more diverse than in the single-round setting.

 We now show that when all agents except one submit their UCB-based preferences to the GS Platform, the remaining agent has an incentive to also submit preferences based on their UCBs, so long as they do not have multiple stable arms.

First, we establish the following lemma, which is an upper bound on the expected number of times the remaining agent can pull an arm that is better than their optimal match, regardless of what preferences they might have submitted to the platform.

 \begin{lemma}\label{lem:better-than-optimal}
 	Let $T_l^{i}(n)$ be the number of times an agent $i$ pulls an arm $l$ such that the mean reward of $l$ for $i$ is greater than $i$'s optimal match. Then
 	\begin{equation}
 	\E[T_l^{i}(n)] \leq \min_{Q \in \calC(M_{i, \ell})} \sum_{(j,k,k^\prime) \in Q} \left(5 + \frac{6 \log (n)}{\gaptrip{j}{k}{k^\prime}^2}\right)
 	\end{equation}
 \end{lemma}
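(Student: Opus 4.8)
The plan is to mirror the proof of Theorem~\ref{thm:meta}, the one new difficulty being that the ``errant'' agent $i$ now plays an arbitrary strategy rather than UCB, so I must make sure that every blocking triplet I charge against belongs to some \emph{other} agent. First I would rewrite the quantity of interest as a sum over matchings. Whenever agent $i$ pulls an arm $l$ with $\meanreward{i}{l} > \meanreward{i}{\optmatch{i}}$, the matching $m$ selected by the platform has $m(i)=l$; since $\optmatch{i}$ is the most preferred valid match of $i$, no truly stable matching can assign $i$ an arm strictly better than $\optmatch{i}$, so $m$ is achievable but not truly stable, i.e.\ $m\in M_{i,\ell}$, and conversely each such matching contributes exactly one pull of $l$ by $i$. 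Hence $T_l^{i}(n) = \sum_{m\in M_{i,\ell}} T_m(n)$, and for any cover $Q$ of $M_{i,\ell}$ the same counting as in \eqref{eq:count_bmatches} gives $\sum_{m\in M_{i,\ell}} T_m(n) \le \sum_{(\player{j},\arm{k},\arm{k^\prime})\in Q} L_{j,k,k^\prime}(n)$.

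The crux is the structural claim that $M_{i,\ell}$ can always be covered by triplets $(\player{j},\arm{k},\arm{k^\prime})$ with $j\ne i$; equivalently, every non-truly-stable $m$ with $m(i)=\ell$ and $\meanreward{i}{\ell}>\meanreward{i}{\optmatch{i}}$ has a blocking pair (with respect to the true preferences) whose agent is not $i$. I would prove this by a rejection-chain/exchange argument anchored at the agent-optimal matching $\optmatches$. Because $\ell \succ_i \optmatch{i}$, the arm $\ell$ is not a valid match of $i$, hence $\ell$ is matched in $\optmatches$ to some $p_1=\optmatches^{-1}(\ell)\ne i$, and stability of $\optmatches$ forces $\ell$ to prefer $p_1$ over $i$ (otherwise $(\player{i},\arm{\ell})$ would block $\optmatches$). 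In $m$ the arm $\ell$ is held by $i$, so if $p_1$ prefers $\ell$ to $m(p_1)$ then $(\player{p_1},\arm{\ell})$ is a blocking pair with $p_1\ne i$ and we are done; otherwise $p_1$ is itself matched strictly above its optimal partner, and I repeat the argument with the arm $m(p_1)$, generating a sequence $i,p_1,p_2,\dots$ with $p_{t+1}=\optmatches^{-1}(m(p_t))$ in which every agent is matched above its optimal arm.

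The main obstacle is ruling out the case in which this chain cycles back through $i$ without ever exposing a blocking agent distinct from $i$. The observation that closes the argument is that each ``continue'' step forces the current agent's $m$-arm to coincide with \emph{another agent's optimal arm}: a free or unmatched arm is impossible here, since an agent preferring an unmatched arm to its optimal partner already contradicts stability of $\optmatches$. Thus the chain lives inside the finite set of agents matched above their optimum and must close into a cyclic improvement of $\optmatches$. Such a cycle would produce a matching that every agent on it strictly prefers to $\optmatches$, contradicting the agent-optimality of $\optmatches$; and the exit (blocking) step can never occur at $i$, because $i$ strictly prefers its current arm $\ell$ to $\optmatch{i}$ and so is never in the ``prefers its optimal arm'' case. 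Consequently the chain must terminate in a blocking pair at some $j\ne i$. Carrying this out rigorously---in particular handling agents and arms lying outside the cycle when verifying that the rotated matching is stable---is the technical heart of the proof.

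Finally I would assemble the bound. Having covered $M_{i,\ell}$ by triplets with $j\ne i$, each such agent $j$ ranks arms by its own upper confidence bounds, so the single-agent UCB analysis behind \eqref{eqn:ucb} applies verbatim and yields $\E L_{j,k,k^\prime}(n)\le 5+6\log(n)/\gaptrip{j}{k}{k^\prime}^2$ for every triplet in the cover. Taking expectations in the covering inequality and minimizing over covers---all of which may be taken to use only agents $j\ne i$, exactly as in Theorem~\ref{thm:meta}---gives $\E[T_l^{i}(n)] \le \min_{Q\in\calC(M_{i,\ell})} \sum_{(\player{j},\arm{k},\arm{k^\prime})\in Q}\big(5+6\log(n)/\gaptrip{j}{k}{k^\prime}^2\big)$, as claimed. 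The only place the analysis genuinely departs from Theorem~\ref{thm:meta} is the structural claim of the second and third paragraphs, so that is where I would concentrate the effort.
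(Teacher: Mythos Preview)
Your decomposition into matchings and the covering argument are exactly as in the paper, and you have correctly isolated the crux: every matching $m\in M_{i,\ell}$ must contain a blocking triplet whose agent is different from $i$. The paper proves this claim by a different route. It invokes a paths-to-stability result (Theorem~4.2 of \citet{hernan95paths}): starting from $m$ one can reach a truly stable matching by successively satisfying blocking pairs in a gender-consistent order; if every blocking pair of $m$ involved only $i$, then $i$'s partner could never drop below $\ell$ along this path, so the terminal stable matching would give $i$ an arm strictly preferred to $\optmatch{i}$, which is impossible. Once the structural claim is in hand, the paper finishes exactly as you do, via the identity $\E[T_l^{i}(n)] = \sum_{m \in M_{i,\ell}} \E T_m(n)$ and the single-agent UCB bound \eqref{eqn:ucb} applied to the agents $j\neq i$.

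Your rejection-chain argument, however, has a genuine gap at the step ``such a cycle would produce a matching that every agent on it strictly prefers to $\optmatches$, contradicting the agent-optimality of $\optmatches$.'' Agent-optimality is optimality only among \emph{stable} matchings; the rotated matching need not be stable, and the agent-optimal matching is in general \emph{not} strongly Pareto efficient for agents among all matchings (only weakly so: there is no matching that \emph{every} agent strictly prefers). Concretely, take three agents and three arms with $\player{1}\!:\arm{2}\succ\arm{1}\succ\arm{3}$, $\player{2}\!:\arm{1}\succ\arm{2}\succ\arm{3}$, $\player{3}\!:\arm{1}\succ\arm{2}\succ\arm{3}$, and $\arm{1}\!:\player{1}\succ\player{3}\succ\player{2}$, $\arm{2}\!:\player{2}\succ\player{1}\succ\player{3}$. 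Then $\optmatches=(\player{1}\text{--}\arm{1},\,\player{2}\text{--}\arm{2},\,\player{3}\text{--}\arm{3})$, yet $m=(\player{1}\text{--}\arm{2},\,\player{2}\text{--}\arm{1},\,\player{3}\text{--}\arm{3})$ gives both $\player{1}$ and $\player{2}$ strictly better arms while $\player{3}$ is unchanged. With $i=\player{1}$ and $\ell=\arm{2}$, your chain is $\player{1}\to\player{2}\to\player{1}$ and closes into a cycle with no contradiction; the rotated matching is $m$ itself, which is unstable. The blocking pair that actually witnesses the structural claim here is $(\player{3},\arm{1})$, which your construction never visits. So the verification you flag as the ``technical heart'' does not merely require care---it fails as stated, and the argument has to be rerouted, for instance through the paths-to-stability theorem the paper uses.
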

 \begin{proof}
 	If agent $i$ is matched with arm $l$ in any round, the matching $m$ must be unstable according true preferences. We claim that there must exist a blocking triplet $(j,k,k')$ where $j \neq i$.
 	
 	Arguing by contradiction, we suppose otherwise, that all blocking triplets in $m$ only involve agent~$i$. By Theorem 4.2 in \citet{hernan95paths}, we can go from the matching $m$ to a $\mu$-stable matching, by iteratively \emph{satisfying} block pairs in a `gender consistent' order $O$. To satisfy a blocking pair $(k,j)$, we break their current matches, if any, and match $(k,j)$ to get a new matching. Doing so, agent $i$ can never get a worse match than $l$ or become unmatched as the algorithm proceeds, so the matching remains unstable---a contradiction. Hence there must exist a $j \neq i$ such that $j$ is part of a blocking triplet in $m$.
 	In particular, agent $j$ must be submitting its UCB preferences.
 	
 	The result then follows from the identity \[\E[T_l^{i}(n)] = \sum_{m \in M_{i,\ell}} \EE T_m(\horizon),\] and Equation \ref{eqn:ucb}
 \end{proof}
 
Lemma \ref{lem:better-than-optimal} directly implies the following lower bound on the remaining agent's optimal regret.
\begin{proposition}\label{prop:ucb_lower}
	Suppose all agents other than $\player{i}$ submit preferences according to the UCBs \eqref{eq:ucb} to the GS Platform. Then the following upper bound on agent $i$'s optimal regret holds:
	\begin{equation}
	\optregret{i}(n) \ge \sum_{\ell \colon\optgap{i,l} <0} \optgap{i,l} \left[\min_{Q \in \calC(M_{i, \ell})} \sum_{(j,k,k^\prime) \in Q} \left(5 + \frac{6 \log (n)}{\gaptrip{j}{k}{k^\prime}^2}\right) \right].
	\end{equation}
\end{proposition}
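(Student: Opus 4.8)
The plan is to turn the upper bound of Lemma~\ref{lem:better-than-optimal} into the stated lower bound on optimal regret through an exact regret decomposition. First I would record that, because all agents submit their preferences to the Gale-Shapley Platform, at every round the platform returns a stable matching; since $N \le K$, agent $\player{i}$ is matched to exactly one arm in every round and therefore always collects a reward with mean $\meanreward{i}{\action{t}{i}}$. Writing $T_\ell^{i}(n)$ for the number of rounds in which agent $\player{i}$ pulls arm $\ell$, we have $\sum_{\ell} \EE[T_\ell^{i}(n)] = n$, and by linearity of expectation
\begin{align*}
\optregret{i}(n) = n \meanreward{i}{\optmatch{i}} - \sum_{\ell} \meanreward{i}{\ell}\, \EE[T_\ell^{i}(n)] = \sum_{\ell} \optgap{i,\ell}\, \EE[T_\ell^{i}(n)],
\end{align*}
where the last equality substitutes $n = \sum_\ell \EE[T_\ell^{i}(n)]$ and uses $\optgap{i,\ell} = \meanreward{i}{\optmatch{i}} - \meanreward{i}{\ell}$. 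This identity is the crux: it expresses the optimal regret as a gap-weighted count of pulls.

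Next I would split the sum according to the sign of $\optgap{i,\ell}$. Every arm $\ell$ with $\optgap{i,\ell} \ge 0$ contributes a nonnegative term (as $\EE[T_\ell^{i}(n)] \ge 0$), so discarding those terms can only decrease the right-hand side, giving
\begin{align*}
\optregret{i}(n) \ge \sum_{\ell \colon \optgap{i,\ell} < 0} \optgap{i,\ell}\, \EE[T_\ell^{i}(n)].
\end{align*}
The arms $\ell$ remaining in this sum are precisely those with $\meanreward{i}{\ell} > \meanreward{i}{\optmatch{i}}$, i.e. arms strictly better than $\player{i}$'s optimal match, which is exactly the quantity $T_\ell^{i}(n)$ that Lemma~\ref{lem:better-than-optimal} controls.

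Finally I would invoke Lemma~\ref{lem:better-than-optimal} to upper bound each $\EE[T_\ell^{i}(n)]$ by $\min_{Q \in \calC(M_{i,\ell})} \sum_{(j,k,k') \in Q} (5 + 6\log(n)/\gaptrip{j}{k}{k'}^2)$. Because the coefficient $\optgap{i,\ell}$ is strictly negative for every surviving term, multiplying an upper bound on $\EE[T_\ell^{i}(n)]$ by $\optgap{i,\ell}$ reverses the inequality and produces a lower bound on that term; summing over $\ell$ yields the claimed inequality. The argument contains no deep obstacle—it is essentially bookkeeping—but two points deserve care. The first is the sign flip in this last step, which is exactly what converts the ``number of over-good pulls is small'' statement into ``the regret cannot be too negative.'' The second is the justification, in the decomposition, that $\player{i}$ is matched in every round so that no unmatched (zero-reward) rounds are unaccounted for; this is where $N \le K$ and the stability of the platform's output are used. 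It is also worth emphasizing that Lemma~\ref{lem:better-than-optimal} holds whatever preferences $\player{i}$ submits, so the resulting bound lower-bounds the optimal regret over all of $\player{i}$'s possible strategies, which is what makes it a meaningful incentive statement.
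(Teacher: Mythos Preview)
Your proposal is correct and is essentially the same approach as the paper, which simply states that Lemma~\ref{lem:better-than-optimal} ``directly implies'' the proposition. You have spelled out the regret decomposition, the sign-based split, and the inequality reversal that the paper leaves implicit; your extra care in noting that $N\le K$ ensures $\player{i}$ is matched every round (so no zero-reward terms are dropped) and that Lemma~\ref{lem:better-than-optimal} holds regardless of $\player{i}$'s submitted preferences is exactly the justification the paper omits.
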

Therefore, there is no sequence of preferences that an agent can submit to the GS Platform that would give them negative optimal regret greater than $\Ocal(\log n)$ in magnitude. When there is a unique stable matching, Proposition \ref{prop:ucb_lower} shows that no agent can gain significantly in terms of stable regret by submitting preferences other than their UCB rankings.

When there exist multiple stable matchings, however, Proposition~\ref{prop:ucb_lower} leaves open the question of whether any agent can submit a sequence of preferences that achieves super-logarithmic negative \emph{pessimal} regret for themselves, when all other agents are playing their UCB preferences. In other words, can an agent do significantly better than its pessimal stable arm, by possibly deviating from their UCB rankings? This is an interesting direction for future work.

 \section{Decentralized Explore-Then-Commit}\label{sec:decent-etc}

 In the decentralized setting of the matching bandits problem, we propose a simple algorithm based on Explore-Then-Commit (ETC) that can achieve low agent-\emph{optimal} regret, albeit at a suboptimal rate. A key observation behind this algorithm is that the Gale-Shapley algorithm can be implemented with simultaneous proposing \cite[see e.g.][Theorem 1]{roth07deferred}, hence a central platform is not necessary for the agents to reach a stable matching. We analyze this simple algorithm in order to motivate the search for more efficient algorithms in the decentralized setting.
 
 \paragraph{Description of the algorithm}
 There are three stages.  Stage 1 has $H\numarms$ rounds. In stage 1 (``Exploration''), every $K$ rounds, each agent independently samples a random permutation of arms and attempts arms in that order. Agents update the respective sample means of the arms only if the pull was successful. In stage 2 (``Simultaneous Proposing GS"), each round each agent attempts the arm with the highest sample mean that they haven't had a conflict on in Stage 2. Stage 2 continues for $\numagents$ rounds. In stage 3 (``Exploitation''), every agent keeps pulls the last arm they pulled successfully in stage 2.
 
 In the following result, proved in Appendix \ref{app:pf-decent-etc}, we analyze the regret of the decentralized ETC.
 
 \begin{proposition}[Regret bound for decentralized ETC]\label{prop:decent-etc}
 	Consider Decentralized ETC with stage 1 lasting $HK$ rounds. Let $\optgap{i,j} $ , $\optgap{i,\max} $ and $\Delta $ be defined as before in Theorem \ref{thm:centralized_etc}. Let $\rho_{\numagents,\numarms} := \left( 1 - \frac{1}{\numarms} \right)^{\numagents-1}$.
 	The expected agent-optimal regret of player $\player{i}$ is upper bounded by
 	\begin{align}
 	\optregret{i}(\horizon) \leq HK\meanreward{i}{\optmatch{i}}+ (\horizon - H \numarms) \optgap{i, \max} \numagents \numarms \left( 2\exp\left(-\frac{H\rho_{\numagents,\numarms}^2}{2}\right) +  \exp\left( - \frac{H\rho_{\numagents,\numarms}\Delta^2 }{8}\right) \right).
 	\end{align}
 \end{proposition}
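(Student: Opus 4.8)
The plan is to mirror the proof of Theorem~\ref{thm:centralized_etc}, splitting the horizon into the exploration phase (stage 1) and the post-exploration phase (stages 2 and 3), and to control the post-exploration regret through the probability that some agent ends stage 1 with an \emph{invalid} ranking in the sense of Lemma~\ref{lem:opt_valid_rankings}. The genuinely new ingredient relative to the centralized analysis is that, because of conflicts, an agent no longer pulls each arm a deterministic number of times; I would therefore first lower bound the number of \emph{successful} pulls and only then invoke sub-Gaussian concentration. For the decomposition itself, during stage 1 I crudely bound the per-round regret of $\player{i}$ by $\meanreward{i}{\optmatch{i}}$ (lower-bounding the collected reward by zero), so the stage-1 regret is at most $HK\meanreward{i}{\optmatch{i}}$. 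For the remaining $\horizon - HK$ rounds I use that the per-round regret is at most $\optgap{i,\max}$, and that by Lemma~\ref{lem:opt_valid_rankings} the matching produced by the decentralized (simultaneous-proposing) Gale--Shapley procedure is weakly better than $\optmatches$ for every agent, hence yields nonpositive exploitation regret for $\player{i}$, as soon as every agent submits a valid ranking. Consequently the stages-2--3 regret is at most $(\horizon - HK)\optgap{i,\max}$ times the probability that at least one agent holds an invalid ranking at the start of stage 2.

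The core computation is the failure probability, which is where the conflict structure enters. Fix an agent $\player{i'}$ and an arm $\arm{j}$ with $\meanreward{i'}{j} < \meanreward{i'}{\optmatch{i'}}$; an inversion requires $\hatmeanreward{i'}{j}{HK} \ge \hatmeanreward{i'}{\optmatch{i'}}{HK}$. I would bound this in two steps. First, in each block of $\numarms$ rounds the attempt of $\player{i'}$ on a given arm is conflict-free, and hence a successful pull, with probability exactly $\rho_{\numagents,\numarms} = (1 - 1/\numarms)^{\numagents-1}$, because each of the other $\numagents - 1$ agents independently attempts that arm in that round with probability $1/\numarms$; the blocks using fresh permutations are independent, so the number of successful pulls $T_{i',j}$ stochastically dominates a $\mathrm{Binomial}(H, \rho_{\numagents,\numarms})$, and Hoeffding's inequality gives $\PP\!\left(T_{i',j} < H\rho_{\numagents,\numarms}/2\right) \le \exp(-H\rho_{\numagents,\numarms}^2/2)$, and likewise for $\optmatch{i'}$.

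Second, the counts $T_{i',\cdot}$ depend only on the permutations and the fixed arm rankings, hence are independent of the reward values; conditioning on both arms having been pulled at least $H\rho_{\numagents,\numarms}/2$ times, the centered difference $\hatmeanreward{i'}{\optmatch{i'}}{HK} - \hatmeanreward{i'}{j}{HK}$ is sub-Gaussian with variance proxy at most $4/(H\rho_{\numagents,\numarms})$, and since its mean $\optgap{i',j} \ge \Delta$, a sub-Gaussian tail bound yields a contribution $\exp(-H\rho_{\numagents,\numarms}\Delta^2/8)$. Adding the two under-sampling terms and the concentration term gives, per (agent, arm) pair, $2\exp(-H\rho_{\numagents,\numarms}^2/2) + \exp(-H\rho_{\numagents,\numarms}\Delta^2/8)$; a union bound over the at most $\numarms$ offending arms and $\numagents$ agents then produces the factor $\numagents\numarms$ and the bracketed expression in the statement, exactly as Lemma~\ref{lem:prob_invalid} does in the centralized case.

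The step I expect to be the main obstacle is the matching-theoretic claim hidden in the decomposition: that $\numagents$ rounds of \emph{simultaneous} proposing suffice for the decentralized Gale--Shapley dynamics to reach a stable matching, so that stage 3 truly exploits a weakly-optimal arm. Correctness of the reached outcome follows from the cited equivalence of simultaneous and sequential deferred acceptance, but the $\numagents$-round bound on the \emph{convergence time} needs a separate argument. I would prove it with a rejection-chain potential: any rejection in round $r \ge 2$ is triggered by an agent that was itself rejected in round $r-1$ and advanced its proposal, so a run of length $R$ forces a chain of rejections spanning rounds $1,\dots,R-1$, and bounding the number of agents that can appear in such a chain by $\numagents$ controls the convergence time. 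Carefully establishing this bound, together with absorbing the lower-order regret incurred during the $\numagents$ proposing rounds, is the delicate part; by contrast the statistical steps above are routine adaptations of Lemma~\ref{lem:prob_invalid} once the $\mathrm{Binomial}(H,\rho_{\numagents,\numarms})$ lower bound on the sample counts is in hand.
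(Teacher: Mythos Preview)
Your proposal is correct and follows essentially the same route as the paper: a crude $HK\,\meanreward{i}{\optmatch{i}}$ bound on stage-1 regret, a $\mathrm{Binomial}(H,\rho_{\numagents,\numarms})$ lower bound on successful pulls per arm followed by Hoeffding with threshold $h=\tfrac{1}{2}H\rho_{\numagents,\numarms}$, sub-Gaussian concentration on the empirical-mean difference conditional on both counts exceeding $h$, and a union bound over agents and arms invoking Lemma~\ref{lem:opt_valid_rankings}. The paper handles the $\numagents$-round convergence of simultaneous proposing by citation to the deferred-acceptance equivalence rather than by the rejection-chain argument you sketch, so the step you flag as the main obstacle is simply asserted there rather than treated differently.
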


\section{Related work}\label{sec:relwork}

Since its introduction by \citet{Thompson33} in 1933, the stochastic multi-armed bandit (MAB) problem
has inspired a rich body of work spanning different settings, 
algorithms, and guarantees \citep{Lai1985asymp,Bubeck12regret,Szepesvari2018algs}. 

There has been recent interest in the MAB literature in problems with multiple, interacting 
players \citep{pmlr-v49-cesa-bianchi16,Shahrampour17}. In one popular 
formulation known as \emph{bandits with collision}, multiple players choose from the same 
set of arms, and if two or more players choose the same arm, no reward is received by any 
player~\citep[e.g.][]{Liu10distributed, Anandkumar:2011distri, avner2014, rosenski16}. This differs from our formulation, where arms 
have preferences and the most preferred player receives a reward, while the other player
selecting the arm do not. For the stochastic bandits with collision problem where agents have the same preferences for arms, works differ by the extent to which agents are allowed to communicate and to see collisions---the most challenging setting is where there is no communication and no collision information \citep{lugosi18multi}. \citet{bubeck2019multiplayer} delineated
the optimal rates for the non-stochastic version of the problem.

A variant of this problem is where agents have different preferences over arms. For this problem, \citet{Bistriz2018}'s algorithm approximately finds the maximum matching of players to arms with $\Ocal(\log(\horizon)^{2+\kappa})$ regret. However, stable matching does not reduce to maximum matching in general, so such guarantees do not apply to matching with two-sided preferences.

The two-sided matching problem has also been studied in sequential settings. \citet{Das2005two} 
proposed an empirical study of a two-sided matching problem where both sides of the market have 
uncertain preferences  \citet{Johari2017matching} studied a sequential matching problem in
which the market participants are modeled with arrival processes.

\citet{Ashlagi2017communication} considered the communication and preference learning cost of stable matching. Their model formulates preference learning as querying a noiseless choice function, rather than obtaining noisy observations of one's underlying utility. Different players can query their choice functions independently; 
hence congestion in the preference learning stage is not captured by this model. In many markets, 
obtaining information about the other side of the market itself can lead to congestion and thus 
the need for strategic decision. For example, \citet[][chap. 10]{roth1990two}, note that graduating 
medical students go to interviews to ascertain their own preferences for hospitals, but the interviews 
that a student can schedule are limited. Our model begins to capture such tradeoffs by introducing 
statistical uncertainty in the preferences of one side of the market and providing a natural mode 
of interaction between the learning agents.


\section{Discussion} 

We have proposed a new model for dynamic matching in markets under uncertain preferences. 
The model blends learning and competition, and captures two desirable notions: stability 
and sample efficiency.  We presented two natural algorithms which combine classical 
ideas from multi-armed bandits and stable matching.  Our focus in the current paper was 
the centralized UCB method, which we proved enjoys small regret for each player and 
ensures that the market converges quickly to a stable configuration. 

There are many additional questions that can be studied in this model, including 
problems with incomplete information, decentralized matching protocols and shared
reward structures.  We have already seen that the uncertainty of one agent can depress 
the long-term utility of other agents, and we expect to uncover other interactions 
between learning and strategic decision making in this model.

\paragraph{Acknowledgements} The authors thank Alekh Agarwal, S\'ebastien Bubeck, Ofer Dekel, Moritz Hardt, Kevin Jamieson, and Yishay Mansour for helpful comments. This work was supported in part by the Mathematical Data Science program of the Office of Naval Research under grant number N00014-18-1-2764 and by the Army Research Office under grant number W911NF-17-1-0304.

\bibliographystyle{abbrvnat}
\bibliography{recmkt}

\begin{thebibliography}{31}
\providecommand{\natexlab}[1]{#1}
\providecommand{\url}[1]{\texttt{#1}}
\expandafter\ifx\csname urlstyle\endcsname\relax
  \providecommand{\doi}[1]{doi: #1}\else
  \providecommand{\doi}{doi: \begingroup \urlstyle{rm}\Url}\fi

\bibitem[Abdulkadiro{\u{g}}lu and S{\"o}nmez(1999)]{abdulkadirouglu1999house}
A.~Abdulkadiro{\u{g}}lu and T.~S{\"o}nmez.
\newblock House allocation with existing tenants.
\newblock \emph{Journal of Economic Theory}, 88\penalty0 (2):\penalty0
  233--260, 1999.

\bibitem[Abeledo and Rothblum(1995)]{hernan95paths}
H.~Abeledo and U.~G. Rothblum.
\newblock Paths to marriage stability.
\newblock \emph{Discrete Applied Mathematics}, 63:\penalty0 1--12, 10 1995.

\bibitem[Anandkumar et~al.(2011)Anandkumar, Michael, Tang, and
  Swami]{Anandkumar:2011distri}
A.~Anandkumar, N.~Michael, A.~K. Tang, and A.~Swami.
\newblock Distributed algorithms for learning and cognitive medium access with
  logarithmic regret.
\newblock \emph{IEEE J.Sel. A. Commun.}, 29\penalty0 (4):\penalty0 731--745,
  Apr. 2011.

\bibitem[Aridor et~al.(2019)Aridor, Liu, Slivkins, and Wu]{Aridor2019competing}
G.~Aridor, K.~Liu, A.~Slivkins, and Z.~S. Wu.
\newblock Competing bandits: The perils of exploration under competition.
\newblock \emph{The 20th ACM conference on Economics and Computation (EC)},
  2019.

\bibitem[Ashlagi et~al.(2017)Ashlagi, Braverman, Kanoria, and
  Shi]{Ashlagi2017communication}
I.~Ashlagi, M.~Braverman, Y.~Kanoria, and P.~Shi.
\newblock Communication requirements and informative signaling in matching
  markets.
\newblock In \emph{Proceedings of the 2017 ACM Conference on Economics and
  Computation}, EC '17, pages 263--263, New York, NY, USA, 2017. ACM.

\bibitem[Avner and Mannor(2014)]{avner2014}
O.~Avner and S.~Mannor.
\newblock Concurrent bandits and cognitive radio networks.
\newblock In T.~Calders, F.~Esposito, E.~H{\"u}llermeier, and R.~Meo, editors,
  \emph{Machine Learning and Knowledge Discovery in Databases}, pages 66--81,
  Berlin, Heidelberg, 2014. Springer Berlin Heidelberg.

\bibitem[Bistritz and Leshem(2018)]{Bistriz2018}
I.~Bistritz and A.~Leshem.
\newblock Distributed multi-player bandits - a game of thrones approach.
\newblock In S.~Bengio, H.~Wallach, H.~Larochelle, K.~Grauman, N.~Cesa-Bianchi,
  and R.~Garnett, editors, \emph{Advances in Neural Information Processing
  Systems 31}, pages 7222--7232. Curran Associates, Inc., 2018.

\bibitem[Bubeck and Cesa{-}Bianchi(2012)]{Bubeck12regret}
S.~Bubeck and N.~Cesa{-}Bianchi.
\newblock Regret analysis of stochastic and nonstochastic multi-armed bandit
  problems.
\newblock \emph{Foundations and Trends in Machine Learning}, 5\penalty0
  (1):\penalty0 1--122, 2012.

\bibitem[{Bubeck} et~al.(2019){Bubeck}, {Li}, {Peres}, and
  {Sellke}]{bubeck2019multiplayer}
S.~{Bubeck}, Y.~{Li}, Y.~{Peres}, and M.~{Sellke}.
\newblock {Non-Stochastic Multi-Player Multi-Armed Bandits: Optimal Rate With
  Collision Information, Sublinear Without}.
\newblock \emph{arXiv e-prints}, art. arXiv:1904.12233, Apr 2019.

\bibitem[Cesa-Bianchi et~al.(2016)Cesa-Bianchi, Gentile, Mansour, and
  Minora]{pmlr-v49-cesa-bianchi16}
N.~Cesa-Bianchi, C.~Gentile, Y.~Mansour, and A.~Minora.
\newblock Delay and cooperation in nonstochastic bandits.
\newblock In V.~Feldman, A.~Rakhlin, and O.~Shamir, editors, \emph{29th Annual
  Conference on Learning Theory}, volume~49 of \emph{Proceedings of Machine
  Learning Research}, pages 605--622, Columbia University, New York, New York,
  USA, 23--26 Jun 2016. PMLR.

\bibitem[Das and Kamenica(2005)]{Das2005two}
S.~Das and E.~Kamenica.
\newblock Two-sided bandits and the dating market.
\newblock In \emph{Proceedings of the 19th International Joint Conference on
  Artificial Intelligence}, IJCAI'05, pages 947--952, San Francisco, CA, USA,
  2005. Morgan Kaufmann Publishers Inc.

\bibitem[Dubins and Freedman(1981)]{dubins1981mac}
L.~E. Dubins and D.~A. Freedman.
\newblock {Machiavelli and the Gale-Shapley Algorithm}.
\newblock \emph{The American Mathematical Monthly}, 88\penalty0 (7):\penalty0
  485--494, 1981.

\bibitem[Gale and Shapley(1962)]{gale62college}
D.~Gale and L.~S. Shapley.
\newblock College admissions and the stability of marriage.
\newblock \emph{The American Mathematical Monthly}, 69\penalty0 (1):\penalty0
  9--15, 1962.

\bibitem[Gusfield and Irving(1989)]{Gusfield1989stable}
D.~Gusfield and R.~W. Irving.
\newblock \emph{The Stable Marriage Problem: Structure and Algorithms}.
\newblock MIT Press, Cambridge, MA, USA, 1989.

\bibitem[Johari et~al.(2017)Johari, Kamble, and Kanoria]{Johari2017matching}
R.~Johari, V.~Kamble, and Y.~Kanoria.
\newblock Matching while learning.
\newblock In \emph{Proceedings of the 2017 ACM Conference on Economics and
  Computation}, EC '17, pages 119--119, New York, NY, USA, 2017. ACM.

\bibitem[Knuth(1997)]{knuth97stable}
D.~E. Knuth.
\newblock \emph{Stable Marriage and Its Relation to Other Combinatorial
  Problems}, volume~10 of \emph{CRM Proceedings and Lecture Notes}.
\newblock American Mathematical Society, 1997.

\bibitem[Lai and Robbins(1985)]{Lai1985asymp}
T.~Lai and H.~Robbins.
\newblock Asymptotically efficient adaptive allocation rules.
\newblock \emph{Adv. Appl. Math.}, 6\penalty0 (1):\penalty0 4--22, Mar. 1985.

\bibitem[Lattimore and Szepesvari(2019)]{Szepesvari2018algs}
T.~Lattimore and C.~Szepesvari.
\newblock \emph{Bandit Algorithms}.
\newblock Cambridge University Press (To Appear), 2019.

\bibitem[Liu and Zhao(2010)]{Liu10distributed}
K.~Liu and Q.~Zhao.
\newblock Distributed learning in multi-armed bandit with multiple players.
\newblock \emph{Trans. Sig. Proc.}, 58\penalty0 (11):\penalty0 5667--5681, Nov.
  2010.

\bibitem[Lugosi and Mehrabian(2018)]{lugosi18multi}
G.~Lugosi and A.~Mehrabian.
\newblock Multiplayer bandits without observing collision information.
\newblock \emph{CoRR}, abs/1808.08416, 2018.

\bibitem[Maggs and Sitaraman(2015)]{maggs2015algorithmic}
B.~M. Maggs and R.~K. Sitaraman.
\newblock Algorithmic nuggets in content delivery.
\newblock \emph{ACM SIGCOMM Computer Communication Review}, 45\penalty0
  (3):\penalty0 52--66, 2015.

\bibitem[Mansour et~al.(2018)Mansour, Slivkins, and Wu]{MansourSW18}
Y.~Mansour, A.~Slivkins, and Z.~S. Wu.
\newblock Competing bandits: Learning under competition.
\newblock In \emph{9th Innovations in Theoretical Computer Science Conference,
  {ITCS} 2018, January 11-14, 2018, Cambridge, MA, {USA}}, pages 48:1--48:27,
  2018.

\bibitem[Rosenski et~al.(2016)Rosenski, Shamir, and Szlak]{rosenski16}
J.~Rosenski, O.~Shamir, and L.~Szlak.
\newblock Multi-player bandits -- a musical chairs approach.
\newblock In M.~F. Balcan and K.~Q. Weinberger, editors, \emph{Proceedings of
  The 33rd International Conference on Machine Learning}, volume~48 of
  \emph{Proceedings of Machine Learning Research}, pages 155--163, New York,
  New York, USA, 20--22 Jun 2016. PMLR.

\bibitem[Roth(1982)]{roth1982economics}
A.~E. Roth.
\newblock The economics of matching: Stability and incentives.
\newblock \emph{Mathematics of Operations Research}, 7\penalty0 (4):\penalty0
  617--628, 1982.

\bibitem[Roth(1984)]{roth1984evolution}
A.~E. Roth.
\newblock The evolution of the labor market for medical interns and residents:
  a case study in game theory.
\newblock \emph{Journal of political Economy}, 92\penalty0 (6):\penalty0
  991--1016, 1984.

\bibitem[Roth(2007)]{roth07deferred}
A.~E. Roth.
\newblock Deferred acceptance algorithms: History, theory, practice, and open
  questions.
\newblock Working Paper 13225, National Bureau of Economic Research, July 2007.

\bibitem[Roth(2008)]{Roth2008}
A.~E. Roth.
\newblock Deferred acceptance algorithms: history, theory, practice, and open
  questions.
\newblock \emph{International Journal of Game Theory}, 36\penalty0
  (3):\penalty0 537--569, Mar 2008.

\bibitem[Roth and Sotomayor(1990)]{roth1990two}
A.~E. Roth and M.~A.~O. Sotomayor.
\newblock \emph{Two-Sided Matching: A Study in Game-Theoretic Modeling and
  Analysis}.
\newblock Econometric Society Monographs. Cambridge University Press, 1990.
\newblock \doi{10.1017/CCOL052139015X}.

\bibitem[Roth et~al.(2005)Roth, S{\"o}nmez, and {\"U}nver]{roth2005pairwise}
A.~E. Roth, T.~S{\"o}nmez, and M.~U. {\"U}nver.
\newblock Pairwise kidney exchange.
\newblock \emph{Journal of Economic theory}, 125\penalty0 (2):\penalty0
  151--188, 2005.

\bibitem[{Shahrampour} et~al.(2017){Shahrampour}, {Rakhlin}, and
  {Jadbabaie}]{Shahrampour17}
S.~{Shahrampour}, A.~{Rakhlin}, and A.~{Jadbabaie}.
\newblock Multi-armed bandits in multi-agent networks.
\newblock In \emph{2017 IEEE International Conference on Acoustics, Speech and
  Signal Processing (ICASSP)}, pages 2786--2790, March 2017.

\bibitem[Thompson(1933)]{Thompson33}
W.~R. Thompson.
\newblock {On the Likelihood that One Unknown Probability Exceeds Another in
  View of the Evidence of Two Samples}.
\newblock \emph{Biometrika}, 25\penalty0 (3-4):\penalty0 285--294, 12 1933.

\end{thebibliography}

\begin{appendix}
	
	\section{Experimental Details}\label{app:exp_details} 
	
	\paragraph{Figure~\ref{fig:gap}.} This figure represents an empirical evaluation of Example~\ref{example:delta2}. In this setting, there are two agents and two arms. Player $\player{2}$ receives Gaussian rewards from the arms $\arm{1}$, $\arm{2}$ with means $0$ and $1$ respectively and variance $1$.  Player $\player{1}$ receives Gaussians rewards $\Delta$ and $0$ from the arms $\arm{1}$ and $\arm{2}$. Both arms prefer $\player{1}$ over $\player{2}$. Figure~\ref{fig:gap} shows the regret of each agent as a function of $\Delta$ when we run centralized UCB with horizon $400$ and average over $100$ trials. 
	
	\paragraph{Figure~\ref{fig:three}.} This figure represents an empirical evaluation of Example~\ref{example:delta2}. The rewards of the arms for each agent are Gaussian with variance $1$. The have mean rewards of the arms are set so that the preference structure shown in Example \ref{example:delta2} is satisfied. For agents $\player{1}$ and $\player{2}$, the gap in mean rewards between consecutive arms is $1$. For agent $\player{3}$ the gap in mean reward between arms $\arm{1}$ and $\arm{3}$ is $0.05$.  Figure~\ref{fig:three} shows the performance of centralized UCB, averaged over $100$ trials, as a function of the horizon.
	
	\paragraph{Figure~\ref{example:global}.} This figure represents an empirical evaluation of Example~\ref{example:global} when there are $20$ agents and $20$ arms. The rewards of the arms are Guassian with variance $1$. The mean reward gap between consecutive arms is $0.1$.  Figure~\ref{fig:three} shows the performance of centralized UCB, averaged over $50$ trials, as a function of the horizon.

	\section{Proof of Proposition \ref{prop:decent-etc}}\label{app:pf-decent-etc}
	
	\begin{proof}
		
		Suppose stage $1$ lasts $HK$ rounds.
		Fix the agent. 
		For any particular attempt on arm $i$, let $Y_i$ denote the event of a successful pull. The probability of a successful pull is bounded from below by the probability of a successful pull for an agent that is the least preferred by the arm attempted.
		\[ p_i:= \PP\{Y_i = 1\} \geq \left( 1 - \frac{1}{\numarms} \right)^{\numagents-1}=: \rho_{\numagents,\numarms}\]
		
		Let $T_i$ be the number of times arm $i$ was pulled in stage 1. By the independence of the random permutations sampled, we have that $T_i \sim \text{Binomial}(H, p_i)$. We can use a standard tail bound:
		\[ \PP\{ T_i \leq t \} \leq \exp\left(-2\frac{(Hp_i-t)^2}{H}\right) \]
		
		Stage 2 gives rise to a matching that is stable according to the order of the average rewards ($\hat{R}^i_j(T)$), after $N$ rounds. This is essentially the Gale-Shapley algorithm but with simultaneous proposals.
		
		Now we bound the probability that this matching is agent optimal according to the true preferences. If any agent $j$ ranks arm $k$ and arm $k'$ wrongly, we must have $\hat{R}^k_j(H) > \hat{R}^{k'}_j(H) $ but $\meanreward{j}{k'} > \meanreward{j}{k} $. Therefore, we may bound the probability of a blocking pair using the sub-Gaussianity of $\hat{R}^k_j(H) - \hat{R}^{k'}_j(H) $. Let $A_{k,k'}$ denote the event $\{ (\hat{R}^{k'}_j -\hat{R}^{k}_j  ) - (\meanreward{j}{k'} - \meanreward{j}{k}) \leq -(\meanreward{j}{k'} - \meanreward{j}{k}) \}$. 
		
		\begin{align*}
		\PP(A_{k,k'}) &= \sum_{j,j' < H} \PP(A_{k,k'} \cap T_k = j \cap T_{k'} = j') \\
		&=  \sum_{j \wedge j' < h} \PP(A_{k,k'} \cap T_k = j \cap T_{k'} = j') + \sum_{j \wedge j' \geq h} \PP(A_{k,k'} \cap T_k = j \cap T_{k'} = j') \\
		&\leq \PP(T_k < h) + \PP(T_{k'} < h) + \sum_{j \wedge j' \geq h}  \PP(A_{k,k'} \mid  T_k = j , T_{k'} = j') \cdot \PP( T_k = j , T_{k'} = j') \\
		&\leq \PP(T_k < h) + \PP(T_{k'} < h) + \sum_{j \wedge j' \geq h}  \exp\left( - \frac{(j \wedge j')(\gap{k}{k'})^2 }{4}\right)  \PP( T_k = j , T_{k'} = j') \\
		&\leq  \PP(T_k < h) + \PP(T_{k'} < h) + \exp\left( - \frac{h(\gap{k}{k'})^2 }{4}\right).
		\end{align*}
		
		Choosing $h = \frac{1}{2}Hp_i$ gives \[\PP(A_{k,k'}) \leq 2\exp\left(-\frac{Hp_i^2}{2}\right) +  \exp\left( - \frac{Hp_i(\gap{k}{k'})^2 }{8}\right) \le  2\exp\left(-\frac{H\rho_{\numagents, \numarms}^2}{2}\right) +  \exp\left( - \frac{H\rho_{\numagents, \numarms}\Delta^2 }{8}\right)\]
		
		By Lemma \ref{lem:opt_valid_rankings}, we only have to consider $k' = \optmatch{i}$ and $k$ such that $\optgap{i,k} > 0$, so there are at most $K$ such pairs, for each agent.
		
	\end{proof}

\end{appendix}

\end{document}